\newtheorem{theorem}{Theorem}[section]
\newtheorem{assumption}[theorem]{Assumption}
\newtheorem{problem}{Problem}
\newtheorem{rem}[theorem]{Remark}
\newcommand{\set}[1]{\left\{#1\right\}}
\begin{document}
\title{\LARGE \bf Technical Report: Scalable Active Information Acquisition \\for Multi-Robot Systems}

\author{Yiannis Kantaros, and George J. Pappas
\thanks{The authors are with with the GRASP Laboratory, University of Pennsylvania, Philadelphia, PA, 19104, USA. $\left\{\text{kantaros,  pappasg}\right\}$@seas.upenn.edu.  
This work was supported by the ARL grant DCIST CRA W911NF-17-2-0181.
}}
\maketitle 

\begin{abstract}
This paper proposes a novel highly scalable non-myopic planning algorithm for multi-robot Active Information Acquisition (AIA) tasks. AIA scenarios include target localization and tracking, active SLAM, surveillance, environmental monitoring and others. The objective is to compute control policies for multiple robots which minimize the accumulated uncertainty of a static hidden state over an \textit{a priori} unknown horizon. The majority of existing AIA approaches are centralized and, therefore, face scaling challenges. To mitigate this issue, we propose an online algorithm that relies on decomposing the AIA task into local tasks via a dynamic space-partitioning method. The local subtasks are formulated online and require the robots to switch between exploration and active information gathering roles depending on their functionality in the environment. The switching process is tightly integrated with optimizing information gathering giving rise to a hybrid control approach. We show that the proposed decomposition-based algorithm is probabilistically complete for homogeneous sensor teams and under linearity and Gaussian assumptions. 
We provide extensive simulation results that show that the proposed algorithm can address large-scale estimation tasks that are computationally challenging to solve using existing centralized approaches.
\end{abstract}
   
\section{Introduction} \label{sec:Intro}

The Active Information Acquisition (AIA) problem has recently received considerable attention due to its wide range of applications including target tracking \cite{huang2015bank}, environmental monitoring \cite{lu2018mobile}, active simultaneous localization and mapping (SLAM) \cite{carlone2014active}, active source seeking \cite{atanasov2015distributed}, and search and rescue missions \cite{kumar2004robot}. In each of these scenarios, robots are deployed to collect information about a physical phenomenon of interest; see e.g., Figure \ref{fig:experiment}. 

This paper addresses the problem of designing control policies for a team of mobile homogeneous sensors which minimize the accumulated uncertainty of a static landmarks located at uncertain positions over an \textit{a priori} unknown horizon while satisfying user-specified accuracy thresholds. First, we formulate this AIA problem as a centralized stochastic optimal control problem which generates an optimal terminal horizon and a sequence of optimal control policies given measurements to be collected in the future. Under Gaussian and linearity assumptions we can convert the problem into a deterministic optimal control problem, for which optimal control policies can be designed \textit{offline}. To design  sensor policies, we propose a novel algorithm that relies on decomposing the centralized deterministic optimal control problem into local ones via dynamically tessellating the environment into Voronoi cells assuming global information about the hidden state is available. The local subproblems are formulated and solved online and require the robots to switch between exploration and information gathering roles depending on their functionality in the environment. Specifically, a robot switches to an information gathering role if landmarks are estimated to be located inside its Voronoi cell. In this case, sensor-based control actions to localize the landmarks residing within its Voronoi cell are generated by applying our recently proposed sampling-based approach that simultaneously explores both the robot motion space and the information space reachable by the sensors \cite{kantaros2019asymptotically}. On the other hand, a robot adopts an exploration role if no landmarks are estimated to be within its Voronoi region. In this case, control actions are generated using existing area coverage or exploration methods that force the robots to spread in the environment and discover landmarks \cite{Cortes_TRA04, kantaros2015distributed,leung2006active,smith2018distributed,corah2019communication,wang2020autonomous}. As the robots navigate the workspace, they update their Voronoi cells and their roles accordingly. We show that this hybrid control approach results in distributing the burden of information gathering among the robots. Also, we show that the proposed algorithm is probabilistically complete under Gaussian and linearity assumptions. We provide extensive simulation results demonstrating that our algorithm can address large scale estimation tasks that involve hundreds of robots and hidden states with hundreds of dimensions. Finally, we show that the proposed algorithm can also design sensor policies when the linearity assumptions are relaxed.%

\begin{figure}[t]
  \centering
  \includegraphics[width=1\linewidth]{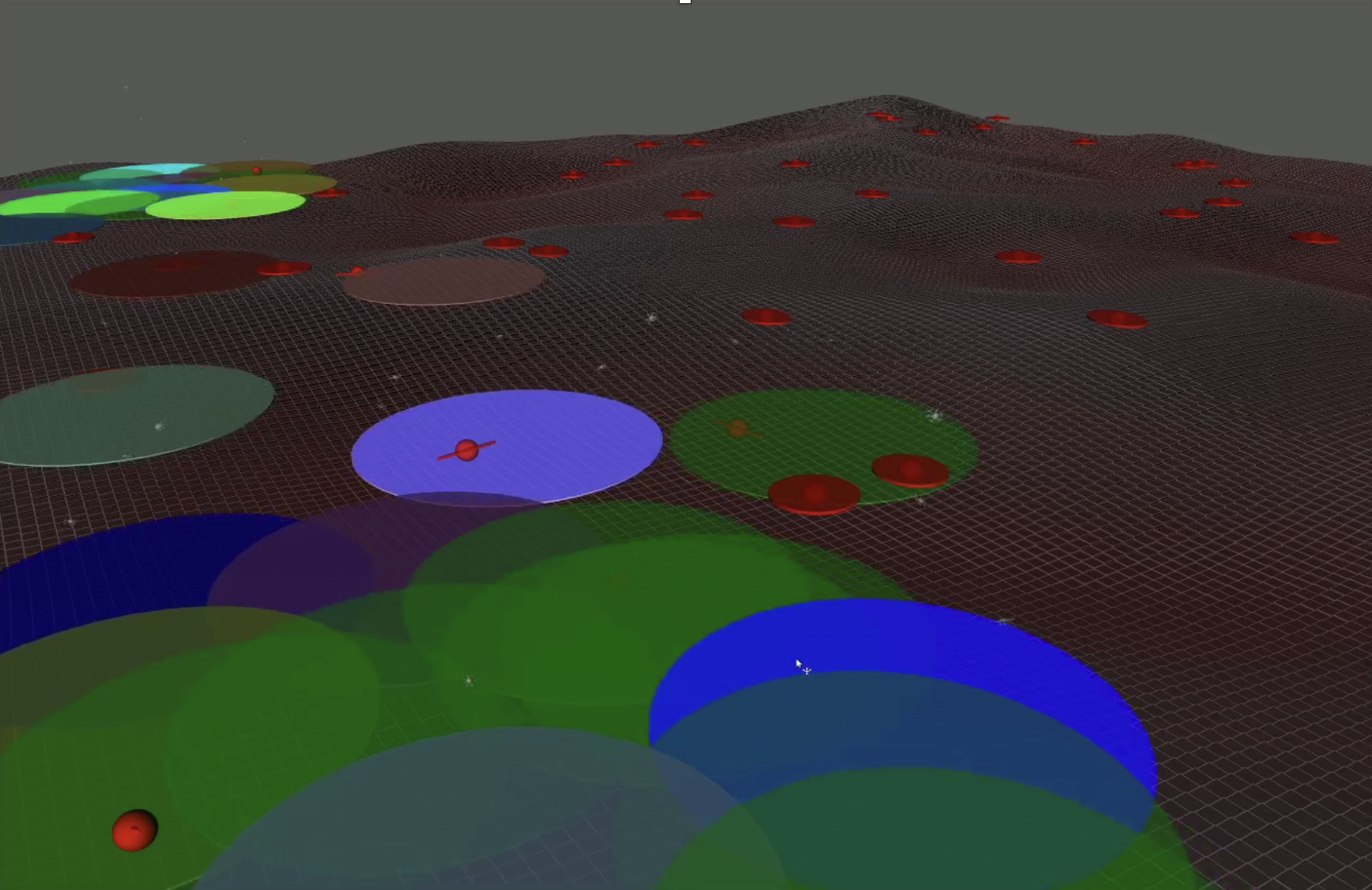}
  \caption{Landmark localization scenario: 30 UAVs with limited field-of-view (colored disks) navigating an environment to localize 50 landmarks (red spheres) of interest. Red ellipses denote the uncertainty about the landmark positions; see \cite{sim_AIA}. }
  \label{fig:experiment}
\end{figure}

{\bf{Literature Review:}} Relevant approaches to accomplish AIA tasks are typically divided into greedy and nonmyopic. Greedy approaches rely on computing controllers that incur the maximum immediate decrease of an uncertainty measure as, e.g., in \cite{martinez2006optimal,graham2008cooperative,dames2012decentralized,charrow2014approximate,meyer2015distributed}, while they are often accompanied with suboptimality guarantees due to submodular functions that quantify the informativeness of paths \cite{corah2018distributed}. Although myopic approaches are usually preferred in practice due to their computational efficiency, they often get trapped in local optima.
To mitigate the latter issue, nonmyopic \textit{search-based} approaches have been proposed that sacrifice computational efficiency in order to design optimal paths. For instance, optimal controllers can be designed by exhaustively searching the physical and the information space \cite{le2009trajectory}. More computationally efficient but suboptimal controllers have also been proposed that rely on pruning the exploration process and on addressing the information gathering problem in a \textit{decentralized} way via coordinate descent \cite{singh2009efficient,atanasov2015decentralized,schlotfeldt2018anytime}. However, these approaches become computationally intractable as the planning horizon or the number of robots increases as decisions are made locally but \textit{sequentially} across the robots. 
Nonmyopic \textit{sampling-based} approaches have also been proposed due to their ability to find feasible solutions very fast, see e.g., \cite{levine2010information,hollinger2014sampling,khodayi2019distributed,lan2016rapidly}. Common in these works is that they are \textit{centralized} and, therefore, as the number of robots or the dimensions of the hidden states increase, the state-space that needs to be explored grows exponentially and, as result, sampling-based approaches also fail to compute sensor policies because of either excessive runtime or memory requirements. More scalable but centralized sampling-based approaches that can solve target tracking problems and logic-based AIA tasks involving tens of robots and long planning horizons have also been proposed recently in \cite{kantaros2019asymptotically,kantaros2020semantic}. Scalability comparisons among search-based and sampling-based methods can be found in \cite{kantaros2019asymptotically}. In his paper, we propose a new AIA algorithm that is highly scalable due to the fact that the global AIA task is decomposed into local ones while the robots make local decisions \textit{simultaneously} as they navigate the workspace assuming all-to-all communication for information exchange purposes is available. 

The {\bf{contribution}}  of this paper can be summarized as follows. \textit{First}, we propose a new highly scalable and nonmyopic method that can quickly design control policies achieving desired levels of uncertainty in AIA tasks that involve sensor teams with hundreds of robots and hidden state with hundreds of dimensions. \textit{Second}, we show that the proposed algorithm is probabilistically complete under linearity and Gaussian assumptions. 
\textit{Third}, we provide extensive simulation results that show that the proposed method can efficiently handle large-scale estimation tasks. 



\section{Problem Definition:\\Centralized Active Information Acquisition} \label{sec:PF}

Consider $N$ mobile robots that reside in an environment $\Omega\subset \reals^d$ with obstacles of arbitrary shape located at $O\subset\Omega$, where $d$ is the dimension of the workspace. The dynamics of the $N$ robots are described by 
\begin{equation}\label{eq:rdynamics}
\bbp_{j}(t+1)=\bbf_j(\bbp_{j}(t),\bbu_{j}(t)),
\end{equation}
for all $j\in\ccalN:=\{1,\dots,N\}$, where $\bbp_{j}(t) \in \Omega_{\text{free}}:=\Omega \backslash O$ stands for the state (e.g., position and orientation) of robot $j$ in the obstacle-free space $\Omega_{\text{free}}$ at discrete time $t$, $\bbu_{j}(t) \in\ccalU_j$ stands for a control input in a \textit{finite} space of admissible controls $\ccalU_j$. 
Hereafter, we compactly denote the dynamics of all robots as 
\begin{equation}\label{eq:rdynamicsC}
\bbp(t+1)=\bbf(\bbp(t),\bbu(t)),
\end{equation}
where $\bbp(t)\in \Omega_{\text{free}}^N$, $\forall t\geq 0$, and $\bbu(t)\in\ccalU:=\ccalU_1\times\dots\times\ccalU_N$.


The environment is occupied by $M>0$ \textit{static} landmarks located at uncertain positions $\bbx_i$, $i\in\ccalM:=\{1,\dots,M\}$. In particular, the robots are not aware of the true landmark positions but they have access to probability distributions over their positions. Such distributions can be user-specified or produced by Simultaneous Localization and Mapping (SLAM) methods \cite{rosinol2019kimera,bowman2017probabilistic}. Specifically, we assume that the position of landmark $i$ follows an initially known Gaussian distribution $\ccalN(\hat{\bbx}_i(0),\Sigma_i(0))$, where $\hat{\bbx}_i(0)$ and $\Sigma_i(0)$ denote the expected position and the covariance matrix at time $t=0$, respectively. Hereafter, we compactly denote the hidden state by $\bbx$, i.e., $\bbx=[\bbx_1^T,\dots,\bbx_M^T]$. Similarly, we denote by 
$\hat{\bbx}(0)$ the vector that stacks the expected positions $\hat{\bbx}_i(0)$ of all landmarks and $\Sigma(0)$ the block diagonal covariance matrix where the diagonal elements are the individual matrices $\Sigma_i(0)$.
%

The robots are equipped with sensors to collect measurements associated with the hidden states as per the following stochastic linear \textit{observation model} 
\begin{equation} \label{eq:measModel}
\bby_j(t) = \bbM(\bbp_j(t))\bbx + \bbv(t),
\end{equation}
where $\bby_j(t)$ is the measurement signal at discrete time $t$ taken by robot $j\in\ccalN$ associated with the landmark positions $\bbx$. Also, $\bbv(t) \sim \ccalN(\bb0, \bbR(t) )$ is sensor-state-dependent Gaussian noise with covariance $\bbR(t)$. Note that we assume that the robots are homogeneous in terms of their sensing capabilities.\footnote{This assumption is used in Section \ref{sec:complOpt} to ensure that the proposed decomposition-based approach is complete.} For simplicity of notation, hereafter, we compactly denote the observation models of all robots as  
\begin{equation}
    \bby(t)= \bar{\bbM}(\bbp(t))\bbx + \bar{\bbv}(t),~\bar{\bbv}(t) \sim \ccalN(\bb0, \bar{\bbR}(t)).
\end{equation}
Hereafter, we assume the robots can exchange their collected measurements via an all-to-all communication network allowing them to always maintain a common/global estimate of the hidden state. 
The quality of measurements taken by all robots up to a time instant $t$, collected in a vector denoted by $\bby_{0:t}$, can be evaluated using information measures, such as the mutual information between $\bby_{0:t}$ and $\bbx$, the conditional entropy of $\bbx$ given $\bby_{0:t}$, or the determinant of the a-posteriori covariance matrix, denoted by $\Sigma(t|\bby_{0:t})$. Note that the a-posteriori mean, denoted by $\hat{\bbx}(t|\bby_{0:t})$. and a-posteriori covariance matrix $\Sigma(t|\bby_{0:t})$ can be computed using probabilistic inference methods, e.g., Kalman filter. 

Given the initial robot configuration $\bbp(0)$ and the Gaussian distributions $\ccalN(\bbx_i(0),\Sigma_i(0))$ for the hidden states, our goal is to select a \textit{finite} horizon $F\geq 0$ and compute control inputs $\bbu(t)$, for all time instants $t\in\{0,\dots,F\}$, that solve the following stochastic optimal control problem: 
%
%
\begin{subequations}
\label{eq:Prob}
\begin{align}
& \min_{\substack{ F, \bbu_{0:F}}} \left[J(F,\bbu_{0:F},\bby_{0:F}) = \sum_{t=0}^{F}   \det\Sigma(t|\bby_{0:t}) \right]  \label{obj1}\\
&\ \ \ \ \ \ \ \ \ \ \ \  \ \ \ \ \ \ \ \det\Sigma(F|\bby_{0:F})\leq \delta, \label{constr2} \\
& \ \ \ \ \ \ \ \ \ \ \ \  \ \ \ \ \ \ \ \bbp(t) \in \Omega_{\text{free}}^N,  \label{constr3} \\
&\ \ \ \ \ \ \ \ \ \ \ \  \ \ \ \ \ \ \ \bbp(t+1) = \bbf(\bbp(t),\bbu(t)), \label{constr4} \\
&\ \ \ \ \ \ \ \ \ \ \ \  \ \ \ \ \ \ \ \bby(t) =\bar{\bbM}(\bbp(t))\bbx + \bar{\bbv}(t), \label{constr5}
\end{align}
\end{subequations}
where the constraints hold for all time instants $t\in\{0,\dots,F\}$.
In \eqref{obj1}, $\bbu_{0:F}$ stands for the sequence of control inputs applied from $t=0$ until $t=F$.
%
In words, the objective function \eqref{obj1} captures the cumulative uncertainty in the estimation of $\bbx$ after fusing information collected by all robots from $t=0$ up to time $F$. 
The first constraint \eqref{constr2} requires the terminal uncertainty of $\bbx$ to be below a user-specified threshold $\delta$. 
The second constraint \eqref{constr3} requires that the robots should never collide with obstacles. The last two constraints capture the robot dynamics and the sensor model.
The Active Information Acquisition (AIA) problem in \eqref{eq:Prob} is a stochastic optimal control problem for which, in general, closed-loop control policies are optimal. Nevertheless, given  the linear observation models, we can apply the separation principle presented in \cite{atanasov2014information} to convert 
\eqref{eq:Prob} to the following deterministic optimal control problem. 
\begin{subequations}
\label{eq:Prob2}
\begin{align}
& \min_{\substack{ F, \bbu_{0:F}}} \left[J(F,\bbu_{0:F}) = \sum_{t=0}^{F}  \det\Sigma(t) \right] \label{obj2}\\
& \ \ \ \ \ \ \ \ \ \ \ \ \ \ \ \ \ \ \det\Sigma(F)\leq \delta,  \label{constr22} \\
&\ \ \ \ \ \ \ \ \ \ \ \ \ \ \ \ \ \ \  \bbp(t) \in \Omega_{\text{free}}^N, \label{constr23} \\
& \ \ \ \ \ \ \ \ \ \ \ \ \ \ \ \ \ \ \  \bbp(t+1) = \bbf(\bbp(t),\bbu(t)), \label{constr24}\\
& \ \ \ \ \ \ \ \ \ \ \ \ \ \ \ \ \ \ \ \Sigma(t+1) =\rho(\bbp(t+1),\Sigma(t)), \label{constr25}
\end{align}
\end{subequations}
where $\rho(\cdot)$ stands for the Kalman Filter Ricatti map. Note that open loop (offline) policies are optimal solutions to \eqref{eq:Prob2}. The problem addressed in this paper can be summarized as follows.
\begin{problem} \label{prob}(Active Information Acquisition)
Given an initial robot configuration $\bbp(0)$ and a Gaussian prior distribution $\ccalN(\hat{\bbx}(0),\Sigma(0))$ for the hidden state $\bbx$, select a horizon $F$ and compute control inputs $\bbu(t)$ for all time instants $t\in\{0,\dots,F\}$ as per \eqref{eq:Prob2}.
\end{problem}

Throughout the paper we make the following assumption allows for application of a Kalman filter; see \cite{atanasov2014information}.
\begin{assumption} \label{as:assumption}
The measurement noise covariance matrices $\bbR(t)$ are known for all time $t\geq 0$. 
\end{assumption}

\begin{rem}[Optimal Control Problem \eqref{eq:Prob}]\label{rem1}
In \eqref{eq:Prob}, any other optimality metric, not necessarily information-based, can be used in place of \eqref{obj1} as long as it is always positive. If non-positive metrics are selected, e.g., the entropy of $\bbx$, then \eqref{eq:Prob} is not well-defined, since the  optimal terminal horizon $F$ is infinite. On the other hand, in the first constraint \eqref{constr2}, any uncertainty measure can be used without any restrictions, e.g., scalar functions of the covariance matrix, or mutual information. Moreover, note that without the terminal constraint \eqref{constr2}, the optimal solution of \eqref{eq:Prob} is  all robots to stay put, i.e., $F=0$. 
\end{rem}

\section{Distributed Planning for\\ Active Information Acquisition} \label{sec:samplingAlg}

A centralized and offline solution to \eqref{eq:Prob2} is proposed in \cite{kantaros2019asymptotically} that is shown to be optimal under linearity and Gaussian assumptions; nevertheless, a centralized solution can incur a high computational cost as it requires exploring a high dimensional joint space composed of the space of multi-robot states and covariance matrices. As a result, its computational cost increases as the number of robots and landmarks increases. To mitigate these challenges, we propose a distributed and online - but sub-optimal - approach that allows the robots to make local decisions, assuming all-to-all communication, to solve \eqref{eq:Prob2} as they navigate the environment. 


\subsection{Decomposition of AIA Task \& Robot Roles}\label{sec:decomp}

The proposed algorithm is summarized in Algorithm \ref{alg:AIA}. The key idea relies on decomposing \eqref{eq:Prob2} into local optimal control problems, that are formulated and solved online to generate individual robot actions. Decomposition of \eqref{eq:Prob2} into local problems is attained by assigning landmarks to each robot online via a \textit{dynamic} space-partitioning method. 

Specifically, at any time $t\geq0$, the environment is decomposed into Voronoi cells generated by the robot positions $\bbp_j(t)$. The Voronoi cell assigned to robot $j$ at time $t$, denoted by $V_j(t)$, is defined as $ V_j(t)=\set{\bbq\in\Omega~|~\lVert\bbp_j(t)-\bbq\rVert\leq \lVert\bbp_e(t)-\bbq\rVert,~\forall e\neq j}$,
i.e., $V_j(t)$ collects all locations $\bbq$ in the workspace that are closer to robot $j$ than to any other robot $e\neq j$ at time $t\geq 0$. Observe that it holds that $\bigcup_{j\in\ccalN}V_j(t)=\Omega$. 

Given $V_j(t)$, robot $j$ is responsible for \textit{actively decreasing the uncertainty of the landmarks that are expected/estimated to lie within its Voronoi cell}. Formally, the landmarks that robot $j$ is responsible for at time $t$ are collected in the following set:
\begin{equation}\label{eq:Aj}
\ccalA_j(t)=\{i\in\ccalM ~|~(\hat{\bbx}_i(t)\in\ccalV_j(t))\wedge (\det\Sigma_j(t)>\delta) \}.
\end{equation}
In words, $\ccalA_j(t)$ collects the landmarks that have not been localized with accuracy determined by $\delta$ and are estimated to be closer to robot $j$ than to any other robot. By definition of the set of $\ccalA_j(t)$ and the fact that $\bigcup_{j\in\ccalN}V_j(t)=\Omega$, we have that every landmark is assigned to exactly one robot and that there are no unassigned landmarks, i.e., $\bigcup_{j\in\ccalN}\ccalA_j(t)=\ccalM$. Notice that the set $\ccalA_j(t)$ may be empty in case the estimated positions of all landmarks are outside the $j$-th Voronoi cell. Depending on the emptiness of the sets $\ccalA_j(t)$, the robots switch between \textit{exploration} and \textit{AIA} roles in the workspace, as follows. 

\subsubsection{Local AIA Strategy} If at time $t=\bar{t}$ there are landmarks assigned to robot $j$, i.e., $\ccalA_j(\bar{t})\neq\emptyset$, then robot $j$ solves the following local AIA problem:
\begin{subequations}
\label{eq:LocalProbb2}
\begin{align}
& \min_{\substack{ F_j, \bbu_{j,\bar{t}:F_j}}} \left[J(F_j,\bbu_{\bar{t}:F_j}) = \sum_{t=\bar{t}}^{F_j}  \sum_{i\in\ccalA_j(\bar{t})}\det\Sigma_i(t) \right] \label{obj3}\\
& \ \ \ \ \ \ \ \ \ \ \ \ \ \ \ \ \ \ \det\Sigma_i(F_j)\leq \delta, \forall i\in\ccalA_j(\bar{t}) \label{constr32} \\
&\ \ \ \ \ \ \ \ \ \ \ \ \ \ \ \ \ \ \  \bbp_j(t) \in \Omega_{\text{free}}, \label{constr33} \\ 
& \ \ \ \ \ \ \ \ \ \ \ \ \ \ \ \ \ \ \  \bbp_j(t+1) = \bbf_j(\bbp_j(t),\bbu_j(t)), \label{constr34}\\
& \ \ \ \ \ \ \ \ \ \ \ \ \ \ \ \ \ \ \ \Sigma(t+1) =\rho(\bbp_j(t+1),\Sigma(t)), \label{constr35}
\end{align}
\end{subequations}
where the constraints \eqref{constr33}-\eqref{constr35} hold for all time instants $t\in[\bar{t},F_j]$. Note that \eqref{eq:LocalProbb2} can viewed as local version of \eqref{eq:Prob2}. In words, \eqref{eq:LocalProbb2} requires robot $j$ to compute a terminal horizon $F_j$ and a sequence of control inputs over this horizon, denoted by  $\bbu_{j,\bar{t}:F_j}$, so that (i) the accumulated uncertainty over the assigned targets is minimized (see \eqref{obj3}), (ii) the terminal uncertainty of the assigned targets is below a user-specified threshold (see \eqref{constr32}) while  (iii) avoiding obstacles and respecting the robot dynamics and the Kalman filter Ricatti equation (see \eqref{constr33}-\eqref{constr35}). Note that in \eqref{constr35}, the  Kalman filter Ricatti equation is applied using only the position of robot $j$ and not the multi-robot state as opposed to \eqref{constr25}. In other words, in \eqref{constr35}, the covariance matrix $\Sigma(t)$ is computed using only local information. For simplicity, we do not add any dependence to robot $j$ to $\Sigma(t)$.
Equivalence between the centralized AIA problem \eqref{eq:Prob2} and the local AIA problems in \eqref{eq:LocalProbb2} is discussed in Section \ref{sec:complOpt}. A computationally efficient method to quickly solve the nonlinear optimal control problem \eqref{eq:LocalProbb2} is discussed in Section \ref{sec:aia}.


\subsubsection{Exploration Strategy} If at time $t=\bar{t}$, there are no landmarks assigned to robot $j$, i.e., $\ccalA_j(\bar{t})=\emptyset$, then robot $j$ is responsible for \textit{exploring} the environment so that eventually it holds that $\ccalA_j(t)\neq\emptyset$. 
In this way, the burden of localizing the landmarks is shared among the robots. The latter can be accomplished if these robots adopt an exploration/mapping or an area coverage strategy. For instance, in \cite{yamauchi1997frontier,leung2006active} a frontiers-based exploration strategy is proposed. Specifically, dummy
“exploration” landmarks with locations at the current map frontiers are considered with a known Gaussian prior on their locations. This fake uncertainty in the exploration-landmark locations promises information gain to the robots. In \cite{kantaros2015distributed,Cortes_ESAIMCOCV05}, area coverage methods are proposed assuming robots with range-limited sensors. Specifically, the robots follow a gradient-based policy to maximize an area coverage objective that captures the part of the workspace that lies within the sensing range of all robots. Alternatively, robots with $\ccalA_j(\bar{t})=\emptyset$ can be recruited from robots $e$ with $\ccalA_e(\bar{t})\neq\emptyset$ to help them localize the corresponding landmarks.


\subsection{Local Planning for Non-myopic AIA}\label{sec:aia}
In this section, we present a computationally efficient method to solve the local optimal control problem \eqref{eq:LocalProbb2}. 
Specifically, to solve \eqref{eq:LocalProbb2}, we employ our recently proposed sampling-based algorithm for AIA tasks \cite{kantaros2019asymptotically}. In particular, the employed algorithm relies on incrementally constructing a directed tree that explores both the information space and the physical space. 

In what follows, we denote by $\mathcal{G}_j=\{\mathcal{V}_j,\mathcal{E}_j,J_{\ccalG,j}\}$ the tree constructed by robot $j$ to solve \eqref{eq:LocalProbb2}, where $\ccalV_j$ is the set of nodes and $\ccalE_j\subseteq \ccalV_j\times\ccalV_j$ denotes the set of edges. The set of nodes $\mathcal{V}_j$ contains states of the form $\bbq_j(t)=[\bbp_j(t), \Sigma(t)]$. 
The function $J_{\ccalG}:\ccalV_j\rightarrow\mathbb{R}_{+}$ assigns the cost of reaching node $\bbq_j\in\mathcal{V}_j$ from the root of the tree. The root of the tree, denoted by $\bbq_j(0)$, is constructed so that it matches the initial state  $\bbp_j(0)$ of robot $j$ and the prior covariance $\Sigma(0)$, i.e., $\bbq_j(0)=[\bbp_j(0), \Sigma(0)]$. For simplicity of notation, hereafter we drop the dependence of the tree on robot $j$. 
The cost of the root $\bbq(0)$ is $J_{\ccalG}(\bbq(0)) = \det\Sigma(0)$,
while the cost of a node $\bbq(t+1)=[\bbp_j(t+1), \Sigma(t+1)]\in\ccalV$, given its parent node $\bbq(t)=[\bbp_j(t), \Sigma(t)]\in\ccalV$, is computed as $J_{\ccalG}(\bbq(t+1))= J_{\ccalG}(\bbq_j(t)) +  \det\Sigma(t+1).$
Observe that by applying this cost function recursively, we get that $J_{\ccalG}(\bbq(t+1)) = J(t,\bbu_{0:t+1})$ which is the objective function in \eqref{eq:LocalProbb2}.

\begin{algorithm}[t]
\footnotesize
\caption{Local Planning for Global AIA Task}
\LinesNumbered
\label{alg:AIA}
\KwIn{ (i) robot dynamics; observation model; (iii) prior Gaussian $\ccalN(\hat{\bbx}(0),\Sigma(0))$; (iv) initial robot configuration $\bbp_j(0)$}
\KwOut{Terminal horizon $F$, and control inputs $\bbu_{0:F}$};\label{rrt:init}
$t=0$\label{alg:init1}\;
Compute Voronoi cell $V_j(0)$ and set of assigned targets $\ccalA_j(0)$\;\label{alg:init1}
\While{$\exists j~\text{s.t.}~\det\Sigma_j(t)>\delta$}{
\If{$\ccalA_j(t)\neq\emptyset$}{
\If{$(t=0)\vee(\ccalA_j(t)\neq \ccalA_j(t-1))$}{
Compute sequence of controllers $\bbu_{j,t:F_j}$ for AIA \cite{kantaros2019asymptotically}\;\label{aia:cntrl1} Apply control input $\bbu_j(t)$\;}
\Else{
Apply the next control input from the most recently computed sequence of control inputs\;} 
}
\Else{
Compute next control input $\bbu_j(t)$ to explore $V_j(t)$\;\label{aia:cntrl2}
}
Collect sensor measurements\;
Update $\hat{\bbx}(t+1)$ and $\Sigma(t+1)$\; 
Update Voronoi cell $V_j(t+1)$ and set $\ccalA_j(t+1)$\;
$t=t+1$\;\label{aia:vor}
}
\end{algorithm}

\normalsize
The tree $\ccalG$ is initialized so that $\ccalV=\{\bbq(0)\}$, $\ccalE=\emptyset$, and $J_{\ccalG}(\bbq(0)) = \det\Sigma(0)$. Also, the tree is built incrementally by adding new states $\bbq_\text{new}$ to $\ccalV$ and corresponding edges to $\ccalE$, at every iteration $n$ of the sampling-based algorithm, based on a \textit{sampling} and \textit{extending-the-tree} operation. 
After taking $n_{\text{max}}\geq 0$ samples, where $n_{\text{max}}$ is user-specified, the sampling-based algorithm terminates and returns a solution to Problem \ref{prob}, i.e., a terminal horizon $F$ and a sequence of control inputs $\bbu_{0:F}$. 
To extract such a solution, we need first to define the set $\ccalX_g\subseteq\ccalV$ that collects all states $\bbq(t)=[\bbp_j(t), \Sigma(t)]\in\ccalV$ of the tree that satisfy 
$\det\Sigma(F_j)\leq \delta$, for all landmarks that belong to $\ccalA_j(t)$, which is the constraint \eqref{constr32}. Then, among all nodes $\ccalX_g$, we select the node $\bbq(t)\in\ccalX_g$, with the smallest cost $J_{\ccalG}(\bbq(t))$, denoted by $\bbq(t_{\text{end}})$. 
Then, the terminal horizon is $F=t_{\text{end}}$, and the control inputs $\bbu_{0:F}$ are recovered by computing the path $\bbq_{0:t_{\text{end}}}$ in $\ccalG$ that connects $\bbq(t_{\text{end}})$ to the root $\bbq(0)$, i.e., $\bbq_{0:t_{\text{end}}}= \bbq(0), \dots, \bbq(t_{\text{end}})$. 
Note that satisfaction of the constraints \eqref{constr33}-\eqref{constr35} is guaranteed by construction of $\ccalG$; see \cite{kantaros2019asymptotically}. The core operations of this algorithm, `\textit{sample}' and `\textit{extend}' that are used to incrementally construct the tree $\ccalG$ can be found in \cite{kantaros2019asymptotically}. 

\subsection{Overview of Distributed Hybrid AIA Algorithm}\label{sec:overview}
At any time $t\geq0$ every robot $j$ adopts a role in the environment based on the set $\ccalA_j(t)$. 
If $\ccalA_j(t)=\emptyset$, then robot $j$ adopts an exploration mode aiming to eventually become responsible for localizing landmarks. If $\ccalA_j(t)\neq\emptyset$, then robot $j$ solves \eqref{eq:LocalProbb2} to actively decrease the uncertainty of the assigned landmarks. In both cases, robot $j$ generates a sequence of controls denoted by $\bbu_{j,t:F_j}$. 
Once the robots apply their respective control input $\bbu_j(t)$, they communicate via an all-to-all connected network to share their measurements and positions so that they can (i) maintain a global estimate of the hidden state and (ii) update their Voronoi cells $V_j(t+1)$. Based on the new Voronoi cells, the robots locally compute the new sets of assigned targets $\ccalA_j(t+1)$. The robots for which it holds that $\ccalA_j(t+1)=\ccalA_j(t)$ do not update their corresponding sequence of control actions, i.e., they apply the control input $\bbu_j(t+1)$ computed at time $t$. On the other hand, the robots that satisfy $\ccalA_j(t+1)\neq\ccalA_j(t)$ recompute their control actions to accomplish either their assigned AIA or exploration tasks. 


\subsection{Extensions}\label{sec:extensions}


\subsubsection{Mobile Landmarks}
Mobile landmarks can also be considered that are governed by the following known but noisy dynamics: $\bbx_i(t+1) =\bbg_i(\bbx_i(t),\bba_i(t), \bbw_i(t))$,
where $\bba_i(t)$ and $\bbw_i(t) \in\reals^{d_w}$ are the control input and the process noise for landmark $i$ at discrete time $t$. We assume that the process noise $\bbw_i(t)$ is uncertain and follows a known normal distribution, i.e., $\bbw(t) \sim \ccalN(0, \bbQ(t))$, where $\bbQ(t)$ is the covariance matrix at time $t$. Assuming that the control inputs $\bba_i(t)$ and the covariance matrices $\bbQ(t)$ are known to the robots for all $t\geq 0$, \cite{kantaros2019asymptotically} can be used to design AIA paths.\footnote{Note that in this case, Algorithm \ref{alg:AIA} is not complete; see also the proof of Theorem \ref{thm:probCompl2}.} 

\subsubsection{Nonlinear Observation Models}
In Section \ref{sec:PF}, we assumed a sensor model that is linear with respect to the landmark positions allowing us to compute offline the a-posteriori covariance matrix without the need of measurements. This assumption can be relaxed by computing the a-posteriori covariance matrices using the linearized observation model about the estimated landmark positions; see Section \ref{sec:Sim}.



\section{Completeness, Optimality \& Convergence}\label{sec:complOpt}
In this section, we show that Algorithm \ref{alg:AIA} is complete  for homogeneous sensor networks and static hidden states, i.e., if there there exists a solution to \eqref{eq:Prob2}, then Algorithm \ref{alg:AIA} will find it. This result is formally stated in Theorem \ref{thm:probCompl2}; to show this, we need first to state the following theorem.

\begin{theorem}[Completeness of Sampling-based AIA \cite{kantaros2019asymptotically}]\label{thm:probCompl}
The sampling-based AIA algorithm described in Section \ref{sec:aia} is probabilistically complete, i.e., if there exists a solution to \eqref{eq:LocalProbb2}, then it will find with probability $1$ a path $\bbq_{j,0:F_j}$, defined as a sequence of states in $\ccalV_j$, i.e., $\bbq_{j,0:F_j} = \bbq_j(0), \bbq_j(1), \bbq_j(2), \dots, \bbq_j(F_j)$, 
that solves \eqref{eq:LocalProbb2}. 
\end{theorem}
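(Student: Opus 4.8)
The plan is to prove probabilistic completeness by an incremental-reachability argument that exploits the finiteness of the control set $\ccalU_j$. Assume \eqref{eq:LocalProbb2} admits a solution, i.e., there exist a horizon $F_j$ and a control sequence $\bbu^*_{j,0:F_j}$ whose induced trajectory $\bbq^*_j(0),\bbq^*_j(1),\dots,\bbq^*_j(F_j)$ -- with $\bbq^*_j(t)=[\bbp^*_j(t),\Sigma^*(t)]$ generated by $\bbf_j$ and the Riccati map $\rho$ -- is collision-free and satisfies the terminal accuracy constraint \eqref{constr32}. By construction the root $\bbq_j(0)=[\bbp_j(0),\Sigma(0)]$ of $\ccalG_j$ equals $\bbq^*_j(0)$, and since $\ccalU_j$ is finite, every state has finitely many admissible successors; hence the set of all states reachable from the root is a locally finite tree, of which $\bbq^*_j(0),\dots,\bbq^*_j(F_j)$ is a branch. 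It therefore suffices to show that, as the sample budget $n_{\max}\to\infty$, the algorithm adds every node of this branch to $\ccalV_j$ with probability one; the solution-extraction step will then do the rest.

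The key step is a per-iteration lower bound on the probability of the relevant extension. I would show there is a sequence $\zeta_n>0$ such that, conditioned on $\bbq^*_j(t)$ already being a node of the tree at the start of iteration $n$, the \emph{sample} operation selects $\bbq^*_j(t)$ and the \emph{extend} operation draws the control $\bbu^*_j(t)$ with (conditional) probability at least $\zeta_n$; applying $\rho$ then produces $\bbq^*_j(t+1)$ \emph{exactly}, because once the robot position is fixed the covariance update is deterministic -- this is precisely where the separation-principle reformulation \eqref{eq:Prob2} is used. Two facts make $\zeta_n$ explicit: the sampling distribution puts positive mass on each current tree node, and the distribution over $\ccalU_j$ puts mass at least $1/|\ccalU_j|$ on each admissible input. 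Since the tree grows by at most a bounded number of nodes per iteration, $|\ccalV_j|=O(n)$, so $\zeta_n\geq c/n$ for a constant $c=c(|\ccalU_j|)>0$ and hence $\sum_n\zeta_n=\infty$. An induction on $t$ along the branch (base case: the root is present from the outset), combined with the conditional (Lévy) form of the Borel--Cantelli lemma applied to the divergent series, then yields $\Pr[\bbq^*_j(t)\in\ccalV_j\ \text{for some finite }n]=1$ for every $t\in\{0,\dots,F_j\}$; that is, the whole solution branch is recovered almost surely.

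Finally I would close the loop through the extraction step. Once $\bbq^*_j(F_j)$ is in the tree it satisfies $\det\Sigma_i(F_j)\leq\delta$ for all $i\in\ccalA_j(\bar t)$, so $\bbq^*_j(F_j)\in\ccalX_g$ and $\ccalX_g\neq\emptyset$; the algorithm then returns the root-to-node path of the minimum-cost element of $\ccalX_g$, and every such path respects \eqref{constr33}--\eqref{constr35} by construction of $\ccalG_j$. Using the cost recursion $J_{\ccalG}(\bbq(t+1))=J(t,\bbu_{0:t+1})$, the returned path is thus a feasible, cost-ranked solution of \eqref{eq:LocalProbb2} with probability one for $n_{\max}$ large enough, which is the claim; applying the same branch-recovery argument to an \emph{optimal} solution of \eqref{eq:LocalProbb2} additionally shows the returned cost converges to the optimal one as $n_{\max}\to\infty$, though feasibility already suffices for completeness.

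The main obstacle is the lack of independence across iterations together with the fact that $\zeta_n\to 0$ as the tree grows: one cannot simply multiply constant per-step probabilities. The argument must make precise that the tree grows only linearly, so $\zeta_n$ decays merely like $1/n$ and $\sum_n\zeta_n$ still diverges, and must condition correctly on the monotone event that a given ancestor $\bbq^*_j(t)$ is already present before invoking divergence to force $\bbq^*_j(t+1)$ in almost surely -- this is exactly the setting handled by the conditional Borel--Cantelli lemma, and stating it cleanly is the crux. A secondary, routine point is that the reproduced branch stays in $\Omega_{\text{free}}$ and respects the dynamics; this is immediate since we are tracking a trajectory assumed feasible and \emph{extend} rejects colliding transitions by construction.
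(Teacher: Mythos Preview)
The paper does not contain its own proof of this theorem: Theorem~\ref{thm:probCompl} is stated as a cited result from \cite{kantaros2019asymptotically} and is used only as a black box in the proof of Theorem~\ref{thm:probCompl2}. There is therefore nothing in the present paper to compare your argument against line by line.

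That said, your proposal is the standard route to probabilistic completeness for RRT-style planners over a finite action set, and it is in the same spirit as the completeness argument in \cite{kantaros2019asymptotically}: exploit finiteness of $\ccalU_j$ so that the reachable set is a locally finite tree; show each iteration selects a given existing node and a given control with probability bounded below by a quantity of order $1/|\ccalV_j|\cdot 1/|\ccalU_j|$; use that the tree grows at most linearly so the bounds are summable-divergent; and conclude by the conditional Borel--Cantelli lemma and induction along the feasible branch. Your identification of the two delicate points---the lack of independence across iterations and the $1/n$ decay of the per-step probability---is exactly right, and your use of the deterministic Riccati update to get \emph{exact} reproduction of $\bbq_j^*(t+1)$ once $\bbp_j^*(t+1)$ is fixed is the correct way to avoid any metric/approximation issues. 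The one assumption you are implicitly relying on, and which you should make explicit, is that the sampling density $f_{\ccalV}$ assigns strictly positive mass to every current node and that $f_{\ccalU}$ assigns strictly positive mass to every admissible control; in \cite{kantaros2019asymptotically} this is guaranteed by the specific biased densities used (with $p_{\ccalV},p_{\ccalU}<1$), and without it the $\zeta_n\geq c/n$ bound need not hold.
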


\begin{theorem}[Completeness of Distributed AIA Algorithm]\label{thm:probCompl2}
Assume that (i) given the initial multi-robot state $\bbp(0)$ and the prior distributions $\ccalN(\hat{\bbx}_i(0),\Sigma_i(0))$, there exists a solution to the centralized optimal control problem \eqref{eq:Prob2}; (ii) the robot dynamics allow all robots to reach any obstacle-free location in $\Omega$. 
Under assumptions (i)-(ii), Algorithm \ref{alg:AIA} is probabilistically complete, i.e., at any time instant $t$ it will generate a terminal horizon $F_j$ and a sequence of control actions $\bbu_{j,t:F_j}$ for all robots with $\ccalA_j(t)\neq\emptyset$ that solve the respective local problems \eqref{eq:LocalProbb2}. 
\end{theorem}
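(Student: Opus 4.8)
The plan is to reduce Theorem~\ref{thm:probCompl2} to Theorem~\ref{thm:probCompl}: since the sampling-based planner of Section~\ref{sec:aia} is probabilistically complete, it returns with probability $1$ a solution of the local problem~\eqref{eq:LocalProbb2} \emph{whenever that problem is feasible}. Thus the whole burden is to show that, at every time $t$ and for every robot $j$ with $\ccalA_j(t)\neq\emptyset$, the instance of~\eqref{eq:LocalProbb2} with root covariance $\Sigma(t)$ and target set $\ccalA_j(t)$ is feasible, i.e., admits a finite horizon $F_j$ and admissible controls $\bbu_{j,t:F_j}$ keeping $\bbp_j(\cdot)\in\Omega_{\text{free}}$ and driving $\det\Sigma_i(F_j)\le\delta$ for all $i\in\ccalA_j(t)$.

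To prove feasibility I would first argue that \emph{each landmark is localizable to accuracy $\delta$ by a single robot from positions in $\Omega_{\text{free}}$}. By assumption~(i) the centralized problem~\eqref{eq:Prob2} has a solution, i.e., there is a finite multi-robot maneuver after which $\det\Sigma(F)\le\delta$. Because the prior is block diagonal and the observation model~\eqref{eq:measModel} is linear in $\bbx$ with $\bbM$ depending only on the sensing robot's position, the Riccati recursion~\eqref{constr25} decouples across landmarks, and in information form the terminal information on landmark $i$ equals $\Sigma_i(0)^{-1}$ plus the \emph{sum} of the per-robot, per-time contributions collected along that maneuver, each contribution being a positive semidefinite function of a single robot position. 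Hence there is a finite list of positions in $\Omega_{\text{free}}$ whose accumulated contributions bring $\det\Sigma_i$ to the required level. Since the sensors are homogeneous (identical $\bbM(\cdot)$) and the landmarks are static, a single robot that visits this same list of positions in any order accumulates at least the same information (additivity of information for a static state and monotonicity of $\rho$ in its covariance argument), so it too drives $\det\Sigma_i$ below $\delta$; assumption~(ii) is exactly what lets robot $j$ realize such a position sequence.

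Given this, a feasible solution of~\eqref{eq:LocalProbb2} is obtained by concatenation: from its state at time $t$, robot $j$ travels (possible by assumption~(ii)) to and then executes, one after another, the localizing position lists of the landmarks in $\ccalA_j(t)$. Because the state is static no information is lost between these sub-maneuvers, so after this finite trajectory $\det\Sigma_i(F_j)\le\delta$ holds simultaneously for every $i\in\ccalA_j(t)$; and starting from $\Sigma(t)$ rather than $\Sigma(0)$ only helps, since $\Sigma_i(t)\preceq\Sigma_i(0)$ and $\rho$ is monotone. This yields an admissible finite-horizon trajectory, establishing feasibility of~\eqref{eq:LocalProbb2}, and Theorem~\ref{thm:probCompl} then gives the claim. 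The argument is uniform in $t$, so it applies at every iteration of Algorithm~\ref{alg:AIA}, even though $\ccalA_j(t)$ may change as the Voronoi cells are recomputed.

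The step I expect to be the crux is the middle one — converting a feasible \emph{centralized, many-robot} plan into a feasible \emph{single-robot} plan per landmark. This is precisely where both hypotheses are indispensable: with heterogeneous sensors a landmark could be localizable only by some robot other than $j$, and with mobile landmarks the information gathered would decay between the sequential visits, so the concatenated single-robot maneuver need no longer meet the accuracy bound — which is why completeness is claimed only in the static, homogeneous regime (cf. the footnote on the homogeneity assumption in Section~\ref{sec:PF} and the footnote on mobile landmarks in Section~\ref{sec:extensions}). A secondary, bookkeeping-level point is reconciling the global terminal threshold in~\eqref{constr22} with the per-landmark thresholds in~\eqref{constr32}, which again relies on the block-diagonal structure of $\Sigma$.
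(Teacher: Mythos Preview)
Your proposal is correct and rests on the same core insight as the paper: homogeneity of the sensors together with a static hidden state lets a \emph{single} robot reproduce the information gathered by the full centralized plan simply by visiting the same positions, and assumption~(ii) guarantees it can actually do so; Theorem~\ref{thm:probCompl} then closes the argument. The execution differs in two minor ways. First, the paper builds the single-robot witness path by ``unfolding'' the entire centralized trajectory $\bbp_{0:F}$ into one long sequence $\bbp_{0:H}^j$ of length $H=(F+1)N-1$ that localizes \emph{all} landmarks at once, whereas you decompose per landmark via the block-diagonal information form and then concatenate only the sub-maneuvers needed for $\ccalA_j(t)$; your construction is slightly more explicit about why the Riccati update transfers (additivity of information contributions) and in principle yields shorter witness trajectories. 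Second, for time instants $t>0$ the paper argues inductively that the tail $\bbp_{t:H}^j$ of the time-$0$ witness remains feasible, while you invoke monotonicity of $\rho$ together with $\Sigma_i(t)\preceq\Sigma_i(0)$; both arguments are valid for a static state and yield the same conclusion. Your flag about reconciling the global threshold $\det\Sigma(F)\le\delta$ in~\eqref{constr22} with the per-landmark thresholds in~\eqref{constr32} is apt; the paper does not address this explicitly either and implicitly treats the two as interchangeable under the block-diagonal structure.
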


\begin{proof}
By assumption (i) we have that there exists a solution to the centralized optimal control problem \eqref{eq:Prob2}, i.e., there exists a finite sequence of multi-robot states defined as $\bbp_{0:F}=\bbp(0),\dots,\bbp(F)$ that solves \eqref{eq:Prob2}, where  $\bbp(t)=[\bbp_1(t),\bbp_2(t),\dots,\bbp_N(t)]\in\mathbb{R}^{N\times d}$. First, we show that if there exists such a feasible solution to \eqref{eq:Prob2}, then there exists a feasible sequence that solves \eqref{eq:LocalProbb2} constructed at $t=0$ for any robot $j$. Next, we generalize this result for any time instant $t>0$. Then, we conclude that Algorithm \ref{alg:AIA} is probabilistically complete due to Theorem \ref{thm:probCompl}.

In particular, first, using $\bbp_{0:F}$, we construct a sequence of waypoints, denoted by $\bbp_{0:H}^j$, for any given robot $j$ that solves the corresponding  sub-problem \eqref{eq:LocalProbb2} defined at time $t=0$. This sequence is constructed so that robot $j$ visits all single-robot waypoints $\bbp_r(t)$ that appear in $\bbp_{0:F}$ including those waypoints that refer to robots $r\neq j$, for all $t\in[0,\dots,H]$.
In other words, $\bbp_{0:H}^j$ is constructed by unfolding $\bbp_{0:F}$ and writing it as a sequence of single-robot positions. For instance, the first $N$ waypoints in $\bbp_{0:H}^j$ constitute the multi-robot state $\bbp(0)$ in $\bbp_{0:F}$. Similarly, the second group of $N$ waypoints in $\bbp_{0:H}^j$ constitutes the multi-robot state $\bbp(1)$ in $\bbp_{0:F}$. The same logic applies to all subsequent groups of $N$ waypoints. Thus, the terminal horizon $H$ is $H=(F+1)N-1$. Next we claim that if a robot $j$ follows the path $\bbp_{0:H}^j$, then at time $H$ the terminal uncertainty of \textit{all} landmarks $i$ will satisfy $\det \Sigma_i(H)\leq \delta$ (regardless of the motion of other robots). To show this, first note that during the execution of the multi-robot sequence $\bbp_{0:F}$, at time $t$ the Kalman filter Riccati map is applied sequentially $N$ times for each observation model/robot. On the other hand, during the execution of the single-robot sequence $\bbp_{0:H}^j$, at time $t$ the Riccati map is executed only once as per the observation model of robot $j$. Since (a) all robots have the same observation model, and (b) the hidden state (i.e., the positions of the landmarks) is static, we conclude, by definition of the Kalman filter Riccati map, that the terminal uncertainty of the hidden state when $\bbp_{0:H}^j$ is executed will be the same as the one when $\bbp_{0:F}$ is executed. Since $\bbp_{0:F}$ is a feasible solution to \eqref{eq:Prob2}, then after robot $j$ following $\bbp_{0:H}^j$, we have that $\det \Sigma_i(H)\leq \delta$, for all landmarks $i$, even for those that are not in $\ccalA_j(0)$. Also, by assumption (ii), the path $\bbp_{0:H}^j$ respects the robot dynamics. As a result, $\bbp_{0:H}^j$ is a feasible solution to \eqref{eq:LocalProbb2} constructed at $t=0$.  

Finally, we inductively show that if there exists a solution to  \eqref{eq:LocalProbb2} constructed at $t=0$, there exists a solution to all local problems constructed at future time instants $t>0$. To show this, let $t_1>0$ be the first time instant (after $t=0$) when a new  sub-problem in the form of \eqref{eq:LocalProbb2} is defined. Observe that if at time $t=0$, robot $j$ decides to follow a feasible path $\bbp_{0:H}^j$, not necessarily the path constructed before, then at any time $t>0$, continuing following the corresponding sub-path $\bbp_{t:H}^j$, trivially results in achieving the desired terminal uncertainty of \textit{all} landmarks - including those that belong $\ccalA_j(t_1)$ - by feasibility of $\bbp_{0:H}^j$. 
Thus, a feasible solution to \eqref{eq:LocalProbb2} formulated at $t_1$ is the sub-path $\bbp_{t_1:H}^j$ (i.e., a sub-sequence of $\bbp_{0:H}^j$).\footnote{Note that the Voronoi cells are used only for landmark assignment purposes while robot mobility is not restricted within them. Thus, updating the Voronoi cells does not affect feasibility of the paths $\bbp_{t_n:H}^j$.} Following the same logic inductively for all future time instants $t_n$ when \eqref{eq:LocalProbb2} is reformulated, we conclude that there always exists a solution to the all local sub-problems completing the proof.
%
\end{proof}

\begin{rem}[Online vs Offline Planning]\label{rem:onlineRe}
According to Theorem \ref{thm:probCompl2}, Algorithm \ref{alg:AIA} can find at time $t=0$ feasible (offline) paths that solve the centralized problem \eqref{eq:LocalProbb2}. As it will be shown in Section \ref{sec:Sim}, the benefit of dynamically assigning landmarks to robots and accordingly re-planning online is that the burden of localizing all $M$ landmarks is shared among all robots resulting in shorter terminal horizons.
\end{rem}

\begin{rem}[Implementation]\label{rem:implem}
A more computationally efficient approach to apply \cite{kantaros2019asymptotically} is to define the goal region $\ccalX_g$ so that it requires only one landmark - instead of all - in $\ccalA_j(t)$ to be localized with accuracy $\delta$. Such an approach may not yield the optimal solution to \eqref{eq:LocalProbb2} due to its myopic nature but it does not sacrifice completeness. This can be shown by following the same logic as in the proof of Theorem \ref{thm:probCompl2}.
\end{rem}

\begin{rem}[Optimality]
Note that to solve the local problem \eqref{eq:LocalProbb2}, information that may be collected by the rest of the robot team is neglected; see also Section \ref{sec:overview}. As a result, the synthesized paths may not constitute an optimal solution to \eqref{eq:Prob2}. This local design of paths sacrifices optimality but it allows for addressing large-scale estimation tasks. 
\end{rem}


\section{Numerical Experiments} \label{sec:Sim}
\begin{table*}[]
\caption{Scalability Analysis: The runtimes $T_{\text{plan}}$, $T_{\text{vor}}$, and $T_{\text{total}}$ are in secs, secs, and mins, respectively. }
\label{tab:scale}
\centering
\begin{tabular}{|l|l|l|l|l|l|l|l|l|l|l|l|}
\hline
                                                            & \begin{tabular}[c]{@{}l@{}}N=10\\ M=10\end{tabular} & \begin{tabular}[c]{@{}l@{}}N=20\\ M=10\end{tabular} & \begin{tabular}[c]{@{}l@{}}N=50\\ M=10\end{tabular} & \begin{tabular}[c]{@{}l@{}}N=100\\ M=10\end{tabular} & \begin{tabular}[c]{@{}l@{}}N=10\\ M=100\end{tabular} & \begin{tabular}[c]{@{}l@{}}N=20\\ M=100\end{tabular} & \begin{tabular}[c]{@{}l@{}}N=50\\ M=100\end{tabular} & \begin{tabular}[c]{@{}l@{}}N=100\\ M=100\end{tabular} & \begin{tabular}[c]{@{}l@{}}N=300\\ M=100\end{tabular} & \begin{tabular}[c]{@{}l@{}}N=100\\ M=200\end{tabular} & \begin{tabular}[c]{@{}l@{}}N=300\\ M=200\end{tabular} \\ \hline
\begin{tabular}[c]{@{}l@{}}$T_{\text{plan}}$/$T_{\text{vor}}$\end{tabular} & 0.11/0.007                                          & 0.12/0.002                                          & 0.13/0.02                                         & 0.12/0.05                                           & 0.18/0.009                                           & 0.19/0.014                                            & 0.17/0.03                                            & 0.20/0.05                                             & 0.26/0.36                                             & 0.31/0.06                                             & 0.36/0.19                                             \\ \hline
$T_{\text{total}}$                                            & 0.7                                            & 1.7                                             & 5.1                                              & 9.2                                           & 2.9                                         & 3.5                                       & 7.8                                            & 17.8                                           &  78.1                                        & 30.6                                              & 98.4                                           \\ \hline
$F$                                             & 38$\pm$5                                            &34$\pm$2                                            & 32$\pm$2                                               & 30$\pm$4                                        & 82$\pm$5                                          & 70$\pm$4                                             & 63$\pm$2                                              & 60$\pm$2                                              & 58$\pm$3                                            & 68$\pm$3                                               & 58$\pm$7                                           \\ \hline
\end{tabular}
\end{table*}

In this section, we present numerical experiments that illustrate the performance of Algorithm \ref{alg:AIA} and show that it can solve large-scale estimation tasks that are computationally challenging to solve using existing centralized methods; see e.g., \cite{kantaros2019asymptotically}. 
Specifically, first, we examine the scalability of Algorithm \ref{alg:AIA} for various numbers of robots and landmarks. 
We also illustrate the benefit of online re-planning
compared to offline planning (see Remark \ref{rem:onlineRe}) and investigate the effect of communication on the terminal horizon. Numerical experiments related to mobile landmarks are also provided.
The sampling-based AIA algorithm \cite{kantaros2019asymptotically} is implemented as discussed in Remark \ref{rem:implem} using the biased density functions $f_{\ccalV}$ and $f_{\ccalU}$ designed in \cite{kantaros2019asymptotically} with parameters $p_{\ccalV}=p_{\ccalU}=0.9$. All case studies have been implemented using MATLAB 2016b on a computer with Intel Core i7 3.1GHz and 16Gb RAM. Simulation videos can be found in \cite{sim_AIA}.


{\bf{Robot Dynamics \& Sensors:}} Throughout this section, we consider robots with differential drive dynamics, where $\bbp_j(t)$ captures both the position and the orientation of the robots. Specifically, the available motion primitives are $u\in\{0,0.1\}\text{m/s}$ and $\omega\in\set{0,5, 10, \dots, 350, 355}\text{deg/s}$.
%
%
%
Moreover, we assume that the robots are equipped with omnidirectional, range-only, sensors with limited range of $0.4$m while they reside in $10\times10$ workspace. Every robot $j$ can generate measurements associated with landmark $i$ as per the model $y_{j,i} = \ell_{j,i}(t) + v(t) \ ~\mbox{if}~ \ (\ell_{j,i}(t) \leq 2)$,
%
where $\ell_{j,i}(t)$ is the distance between landmark $i$ and robot $j$, and $v(t) \sim \ccalN(0,\sigma^2(\bbp_j(t),\bbx_i(t)))$ is the measurement noise. Also, we model the measurement noise so that $\sigma$ increases linearly with $\ell_{j,i}(t)$, with slope $0.25$, as long as $\ell_{j,i}(t)\leq 2$; if $\ell_{j,i}(t)> 2$, then $\sigma$ is infinite. Observe that this observation model is nonlinear and, therefore, the separation principle, discussed in Section \ref{sec:PF}, does not hold. Thus, we execute the sampling-based algorithm to solve the local sub-problems \eqref{eq:LocalProbb2} using the linearized observation model about the estimated landmark positions. Also, the robots with no assigned landmarks, i.e., the robots with $\ccalA_j(t)=\emptyset$, navigate the environment so that they maximize an area coverage metric that captures the  region that lies within the sensing range of all robots \cite{Cortes_ESAIMCOCV05}. The latter forces these robots to spread in the environment so that landmarks eventually reside within their region of responsibility distributing the burden of active information gathering across the robots. 

\begin{figure}[t]
  \centering
%
\subfigure[$t=0$]{
    \label{fig:t0}
  \includegraphics[width=0.45\linewidth]{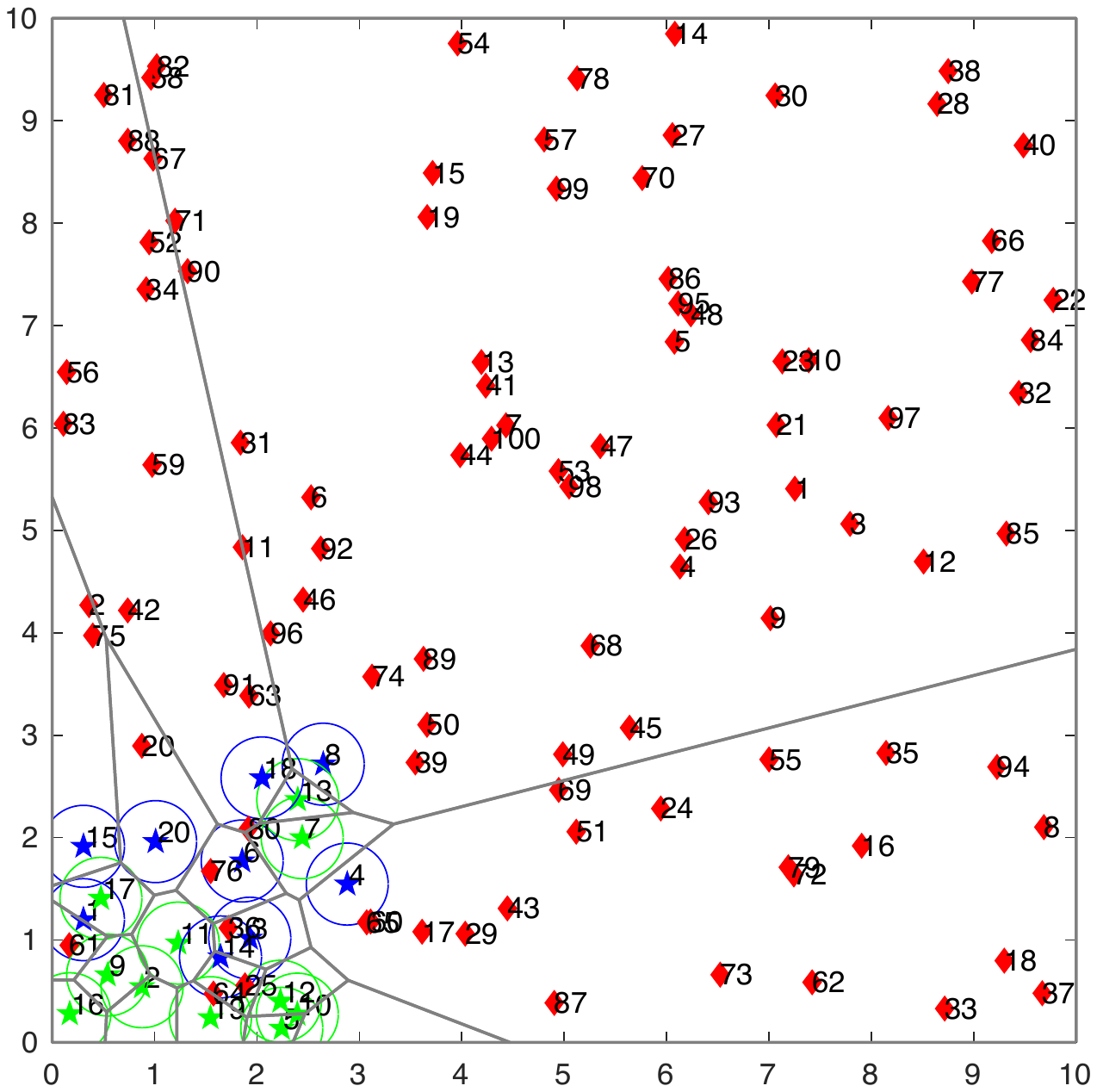}}
 %
   \subfigure[$t=30$]{
    \label{fig:t30}
  \includegraphics[width=0.45\linewidth]{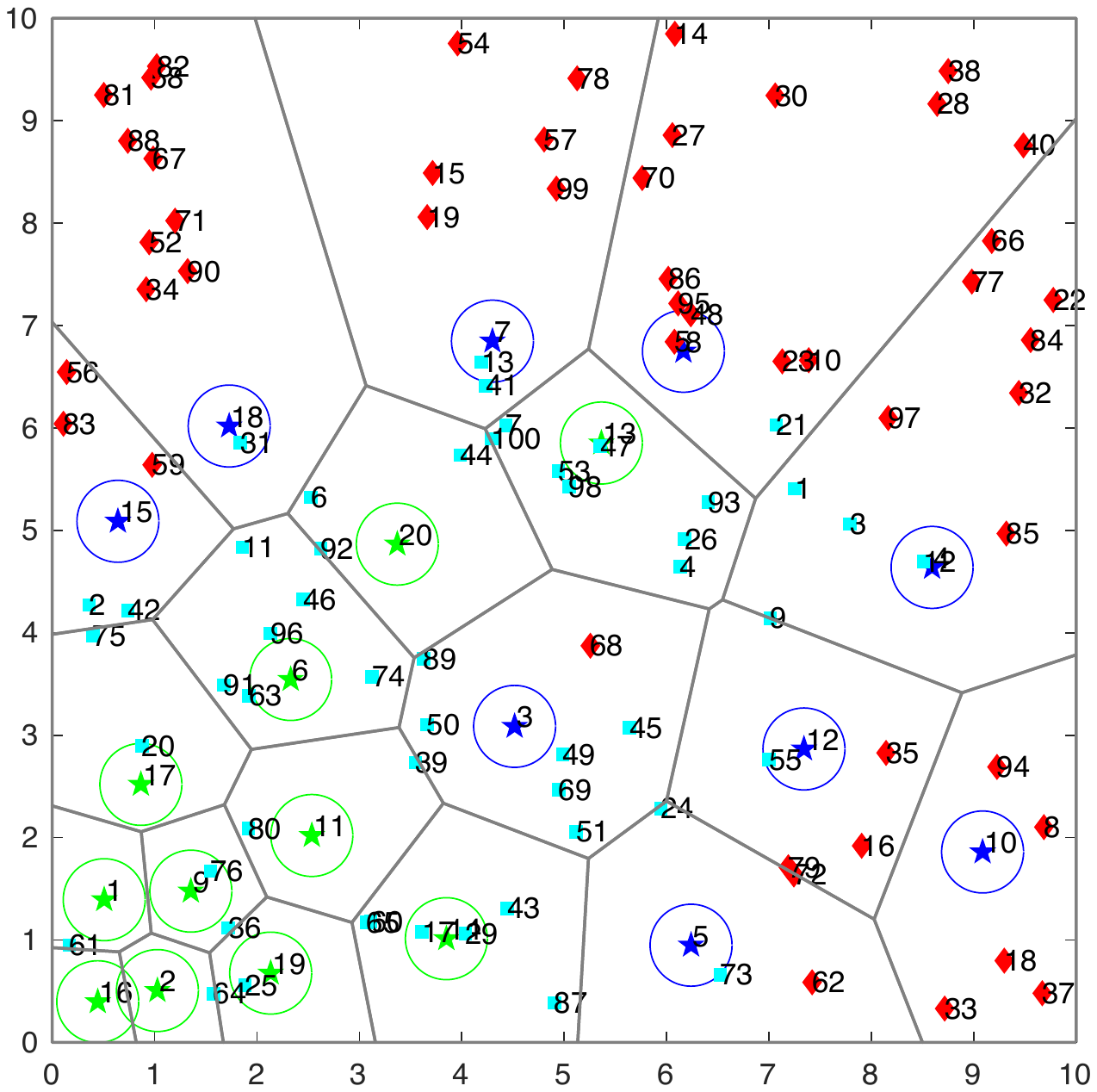}}
\subfigure[$t=50$]{
    \label{fig:t50}
  \includegraphics[width=0.45\linewidth]{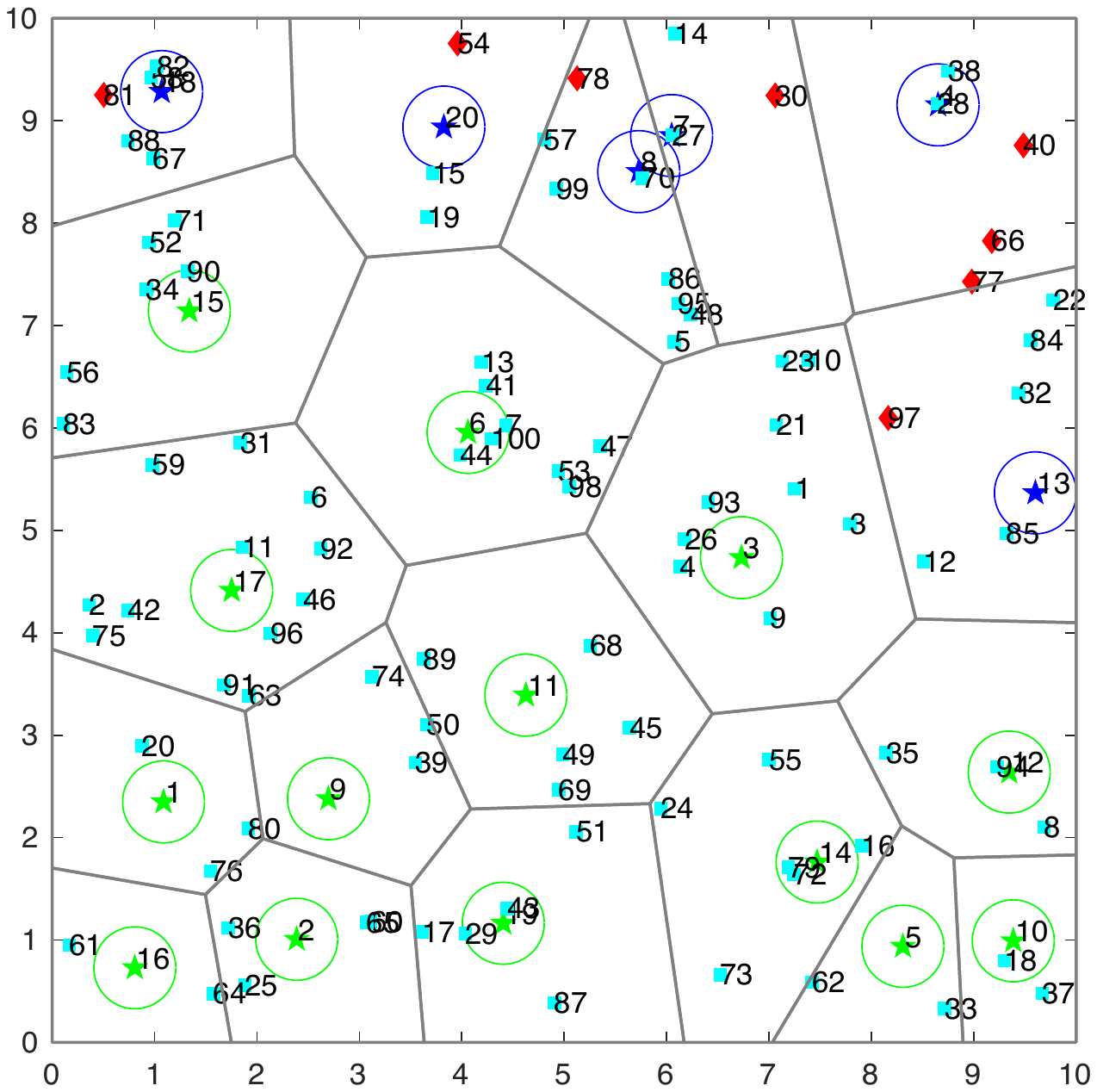}}
     \subfigure[Robot Paths]{
    \label{fig:pathsI}
  \includegraphics[width=0.45\linewidth]{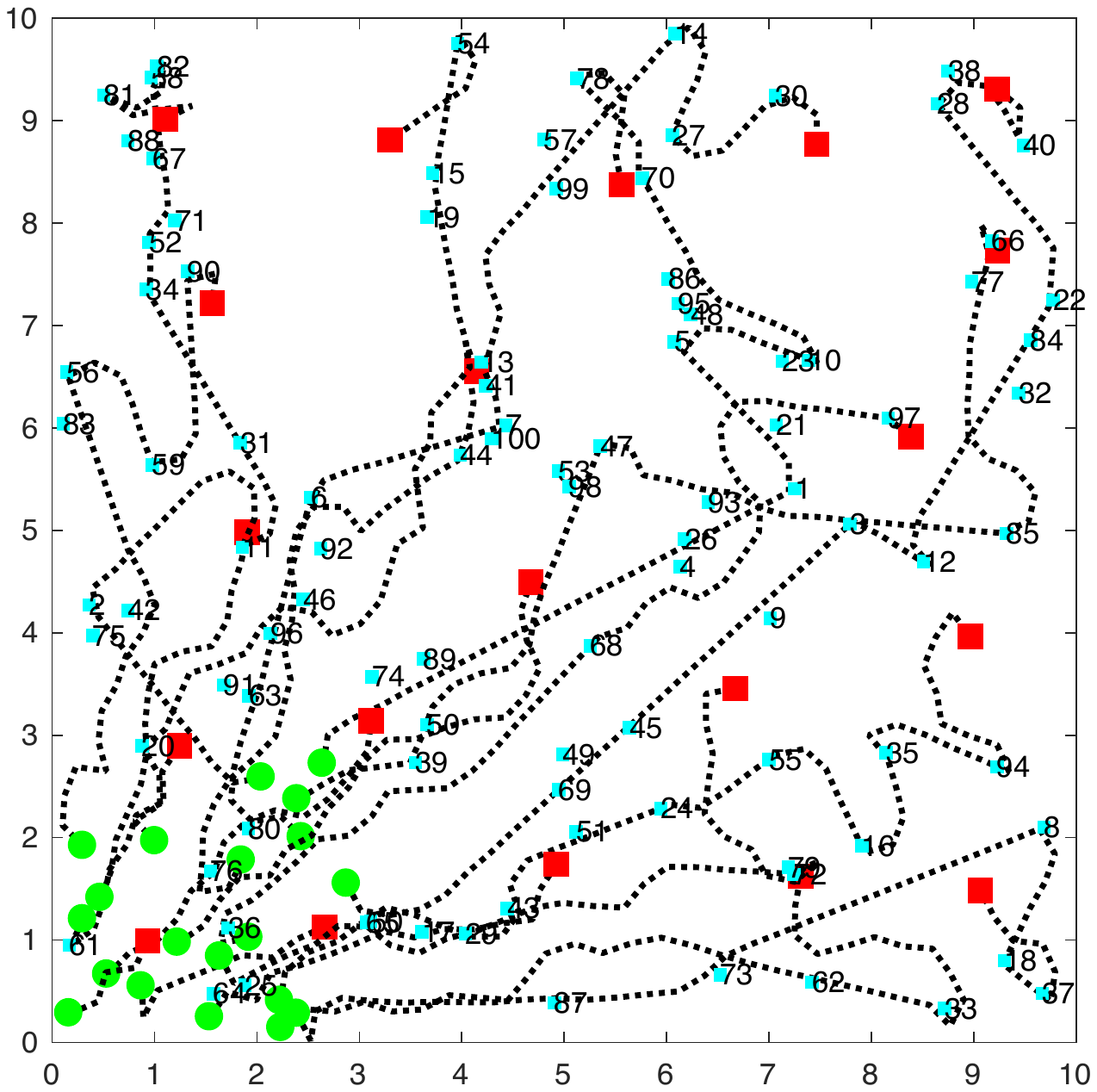}}
  \caption{Landmark localization scenario: Figures \ref{fig:t0}-\ref{fig:t50} show the configurations of $N=20$ robots at various time instants towards localizing $M=100$ static landmarks. The red diamonds and cyan squares correspond to accurately localized and non-localized landmarks, respectively. The blue and green stars correspond to robots that navigate for information gathering and area coverage purposes, respectively. The circle centered at each robot position illustrates the sensing range while the gray segments depict the Voronoi cells. Figure \ref{fig:pathsI} shows the robot paths where the green and red squares denote the initial and final robot positions.}
  \label{fig:caseI}
\end{figure}

\begin{figure}[t]
  \centering
%
\subfigure[Deployment 1]{
    \label{fig:depl1}
  \includegraphics[width=0.45\linewidth]{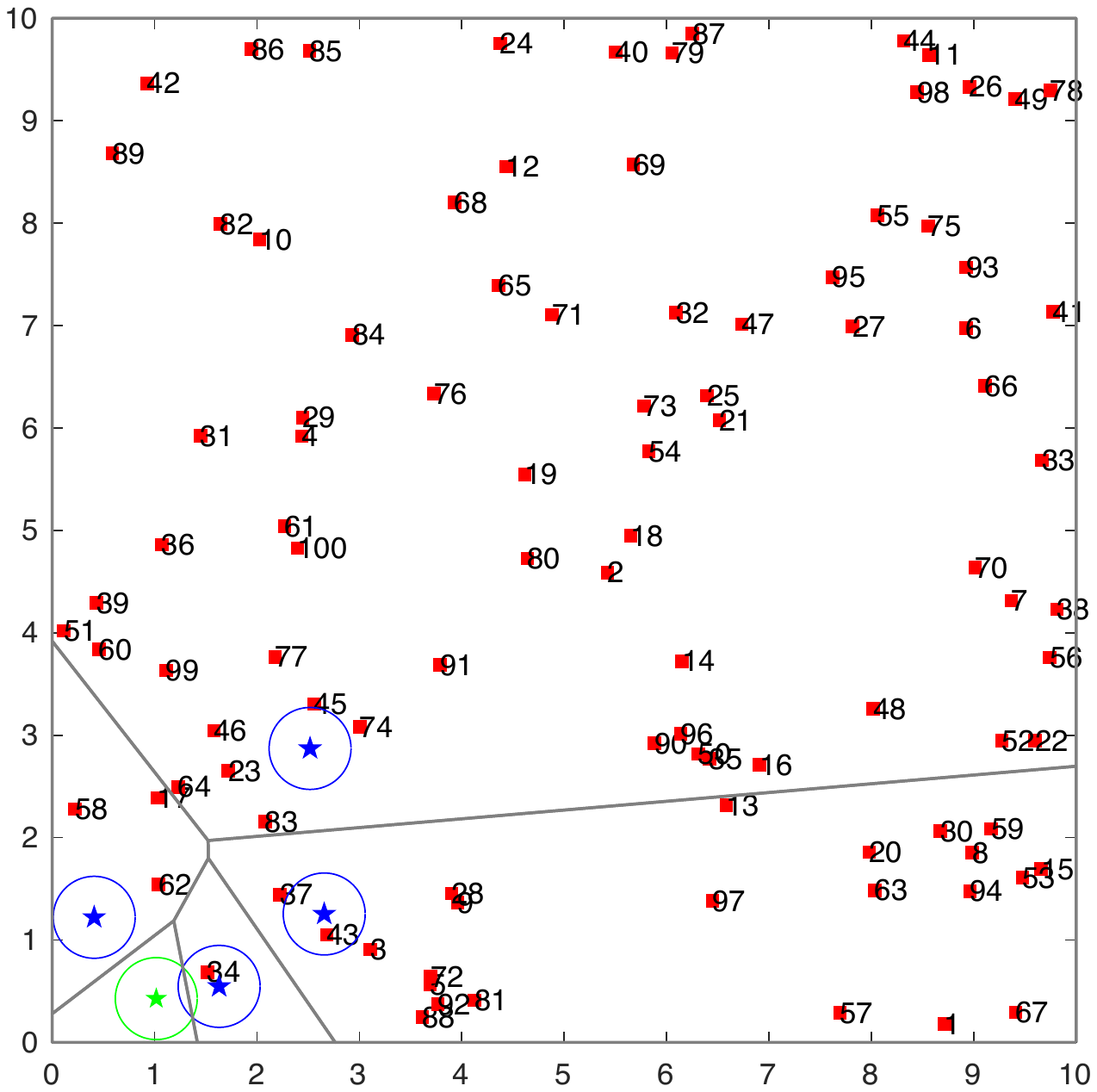}}
 %
   \subfigure[Deployment 2]{
    \label{fig:depl2}
  \includegraphics[width=0.45\linewidth]{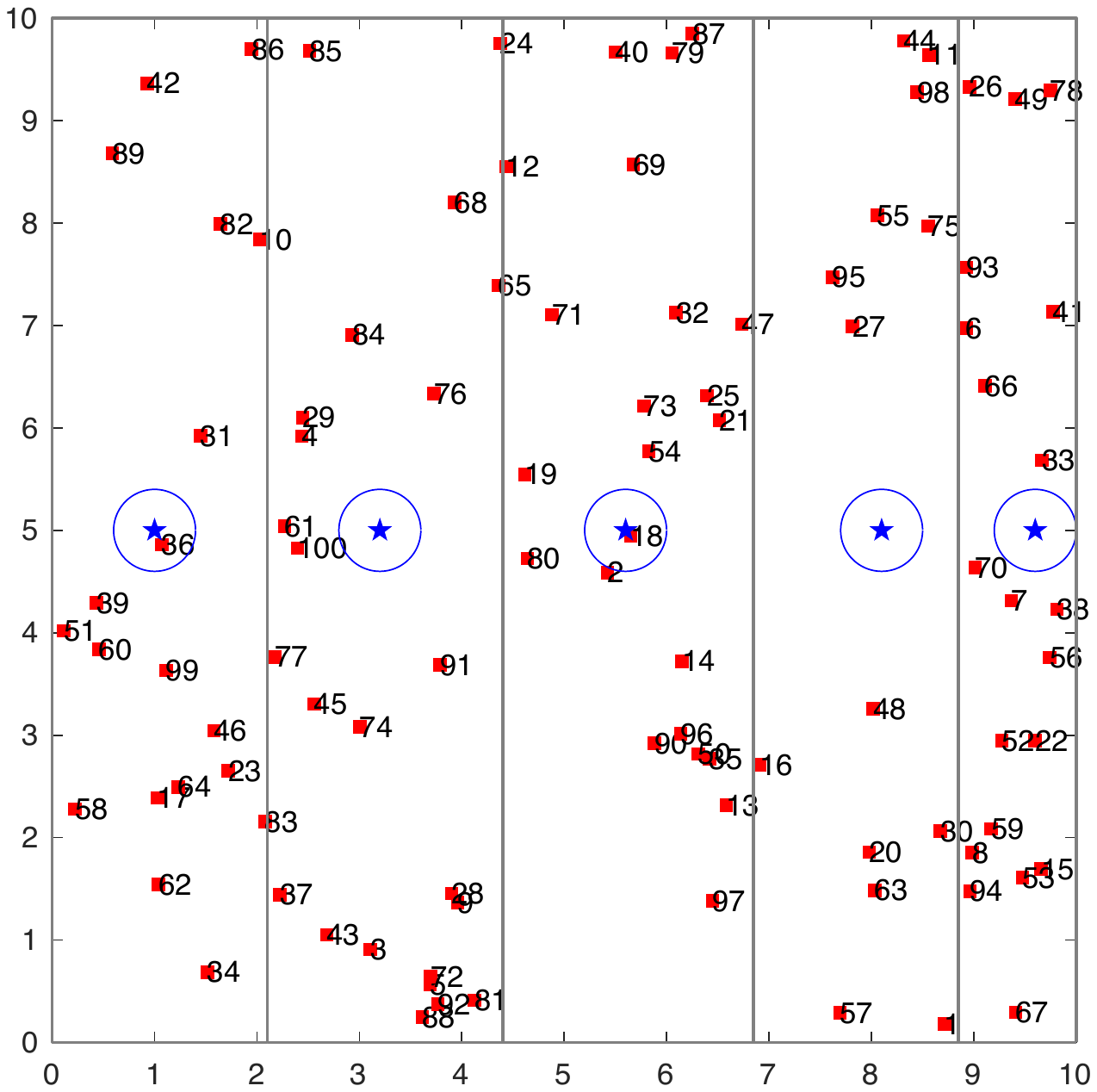}}
 \caption{Graphical depiction of initial multi-robot deployments in Table \ref{ta:vor}. The third initial deployment in Table \ref{ta:vor} differs from the second one only in that the y-coordinate of all robot positions is is $0.1$m.}
  \label{fig:configInit}
\end{figure}

{\bf{Scalability Analysis:}} First, we examine the scalability performance of  Algorithm \ref{alg:AIA} with respect to the number of robots and the number of landmarks. The results are summarized in Table \ref{tab:scale}. In all case studies of Table \ref{tab:scale}, the parameter $\delta$ is selected to be $\delta=1.8\times10^{-6}$, for all $i\in\ccalM$, while all robots initially reside in the bottom-left corner of the $10\text{m}\times10\text{m}$ environment shown in Figure \ref{fig:caseI}. 
In Table \ref{tab:scale} the first line corresponds to the number of robots and landmarks. In the second line, $T_{\text{plan}}$ refers to the average runtime in seconds required for a robot to compute a control action regardless of their role in the workspace (see lines \ref{aia:cntrl1} and \ref{aia:cntrl2} in Alg. \ref{alg:AIA}). Similarly, $T_{\text{vor}}$ shows the average runtime in seconds to compute the Voronoi cells per iteration of Algorithm \ref{alg:AIA} (see line \ref{aia:vor} in Alg. \ref{alg:AIA}). Note that in our implementation, the Voronoi tessellation has been computed in a centralized fashion and, therefore, the corresponding runtime increases as the number of robots increases. Nevertheless, this runtime is always quite small and, therefore, the Voronoi partitioning does not compromise scalability. In the third and fourth row, $T_{\text{total}}$ and $F$ refer to the mean total runtime in minutes and total number of discrete time instants of $10$ simulations, along with the respective standard deviation, required to find the first feasible solution. Note that Algorithm \ref{alg:AIA} has not been implemented in parallel across the robots; instead, in our implementation the robots make local  but sequential decisions, which explains the increasing runtime $T_{\text{total}}$ as the number $N$ of robots increases. 
Observe in Table \ref{tab:scale} that for a given number of landmarks, increasing the number of robots tends to decrease the terminal horizon. Note that the terminal horizon depends on the initial robot configuration. For instance, assuming $N$ robots that are initially uniformly distributed and $M=100$ landmarks, the terminal horizon when $N=20$ and $N=100$ is $35\pm3$ and $15\pm2$, respectively.
Finally, observe that the proposed algorithm can solve estimation tasks that involve hundreds of robots and landmarks, a task that is particularly challenging for existing methods; see e.g., the numerical experiments in \cite{kantaros2019asymptotically}. Figure \ref{fig:caseI} shows the paths that $20$ robots followed to localize $M=100$ landmarks. Observe in this figure that the robots dynamically switch roles in the environment. 


{\bf{Online vs Offline Planning:}} Second, we examine the effect of dynamically updating the Voronoi partitioning - and, consequently, the landmarks assigned to the robots - and accordingly re-planning. The results are summarized in Table \ref{ta:vor} and pertain to a case study with $N=5$ and $M=100$. In Table \ref{ta:vor}, the first row corresponds to three different initial configurations illustrated in Figure \ref{fig:configInit}.  The second and the third row show the mean terminal planning horizon $F_{\text{offline}}$ and $F_{\text{online}}$ of $10$ simulations required to localize all landmarks without and with re-planning, respectively. 
Observe that the total time is always larger when offline planning is considered. This becomes more pronounced in configuration 1 (Fig. \ref{fig:depl1}), as there are robots that have to stay within a small region the whole time which is not the case when online re-planning is considered due to its adaptive nature. As a result, in an online setting the burden of localizing all $M$ landmarks is shared among all robots resulting in accomplishing the AIA task sooner.

{\bf{Effect of Communication:}} Third, we evaluate the effect of all-time and all-to-all communication among the robots on the estimation performance during online planning. Specifically, here we consider a scenario with $N=50$ robots and $M=100$ static landmarks and we assume that communication occurs intermittently and periodically, i.e., every $T>0$ time units. We observed that increasing the period $T$ results in longer terminal planning horizons $F$, as expected, since the robots plan paths without always having access to global estimates of the hidden state. In particular, for $T=1$ (all-time communication), $T=2$, $T=10$, and $T=15$, the average planning horizon of $10$ simulation studies was $63$, $70$, $125$, and $150$, respectively.


\begin{table}[]
\caption{Online vs Offline Planning}
\label{ta:vor}
\centering
\begin{tabular}{|l|l|l|l|}
\hline
N=5, M=100     & Deployment 1 & Deployment 2 & Deployment 3 \\ \hline
$F_{\text{offline}}$ & 782 $\pm$  44    & 190 $\pm$ 30     & 210 $\pm$ 30       \\ \hline
$F_{\text{online}}$  & 141 $\pm$ 16     & 137 $\pm$ 8      & 146 $\pm$ 26     \\ \hline
\end{tabular}
\end{table}


{\bf{AIA with Drones:}} Algorithm \ref{alg:AIA} generates paths that respect the robot dynamics as captured in \eqref{constr34}. A common limitation of sampling-based algorithms is that the more complex the robot dynamics is, the longer it takes to generate feasible paths; see e.g., \cite{kantaros2019asymptotically}. To mitigate this issue, an approach that we investigate in this case study is to generate paths for simple robot dynamics that need to be followed by robots with more complex dynamics. Particularly, in this section, we present simulation studies that involve a team of $N=30$ AsTech Firefly Unmanned Aerial Vehicles (UAVs) with range limited field of view equal to $10$m that operate in an environment with dimensions $200\text{m}\times 200 \text{m}$ occupied by $M=50$ static landmarks; see Figure \ref{fig:drones}.

\begin{figure}[t]
  \centering
    \subfigure[$t=0$]{
    \label{fig:t01}
    \includegraphics[width=0.48\linewidth]{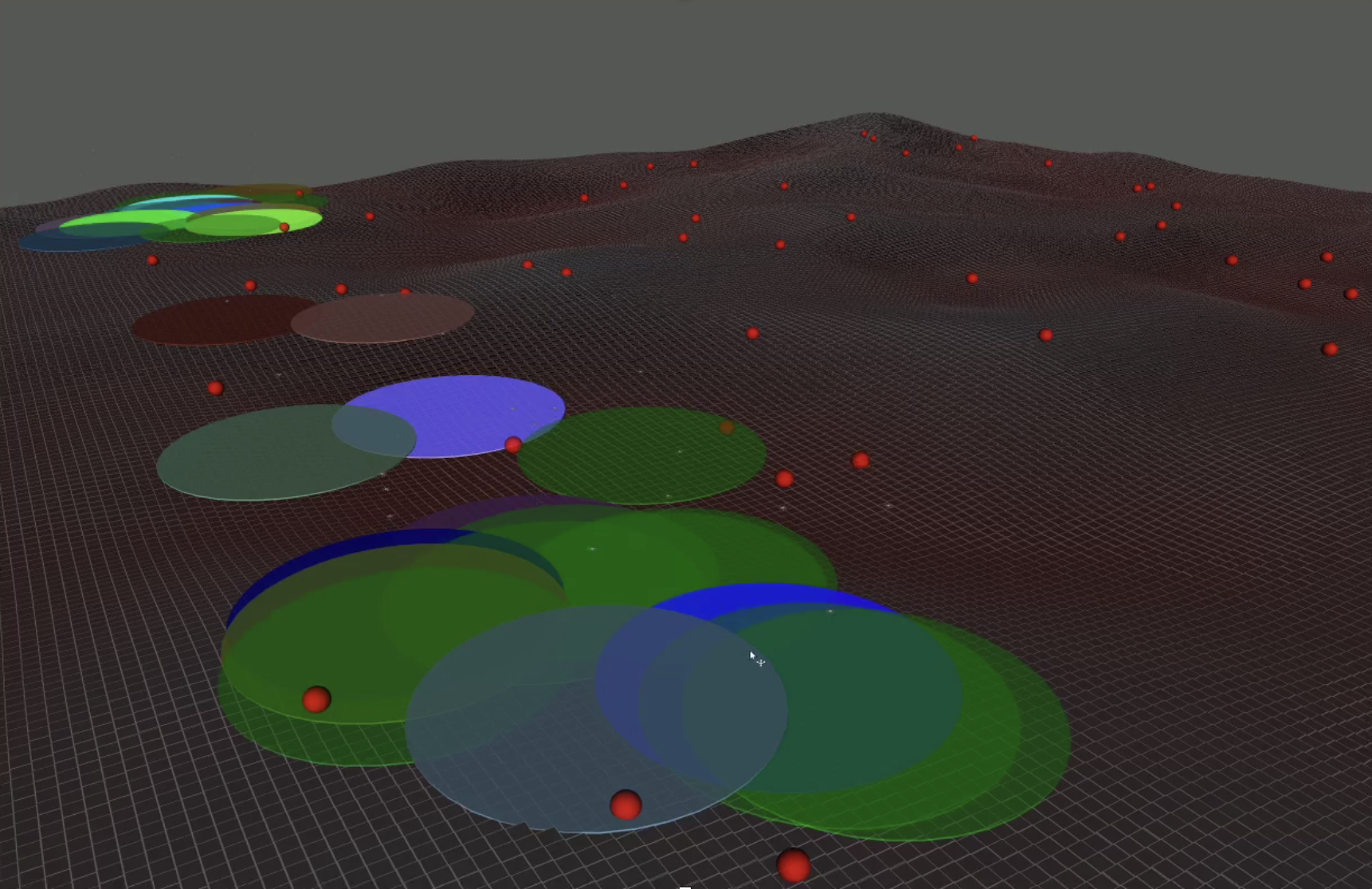}}
     \subfigure[]{
    \label{fig:t21}
  \includegraphics[width=0.48\linewidth]{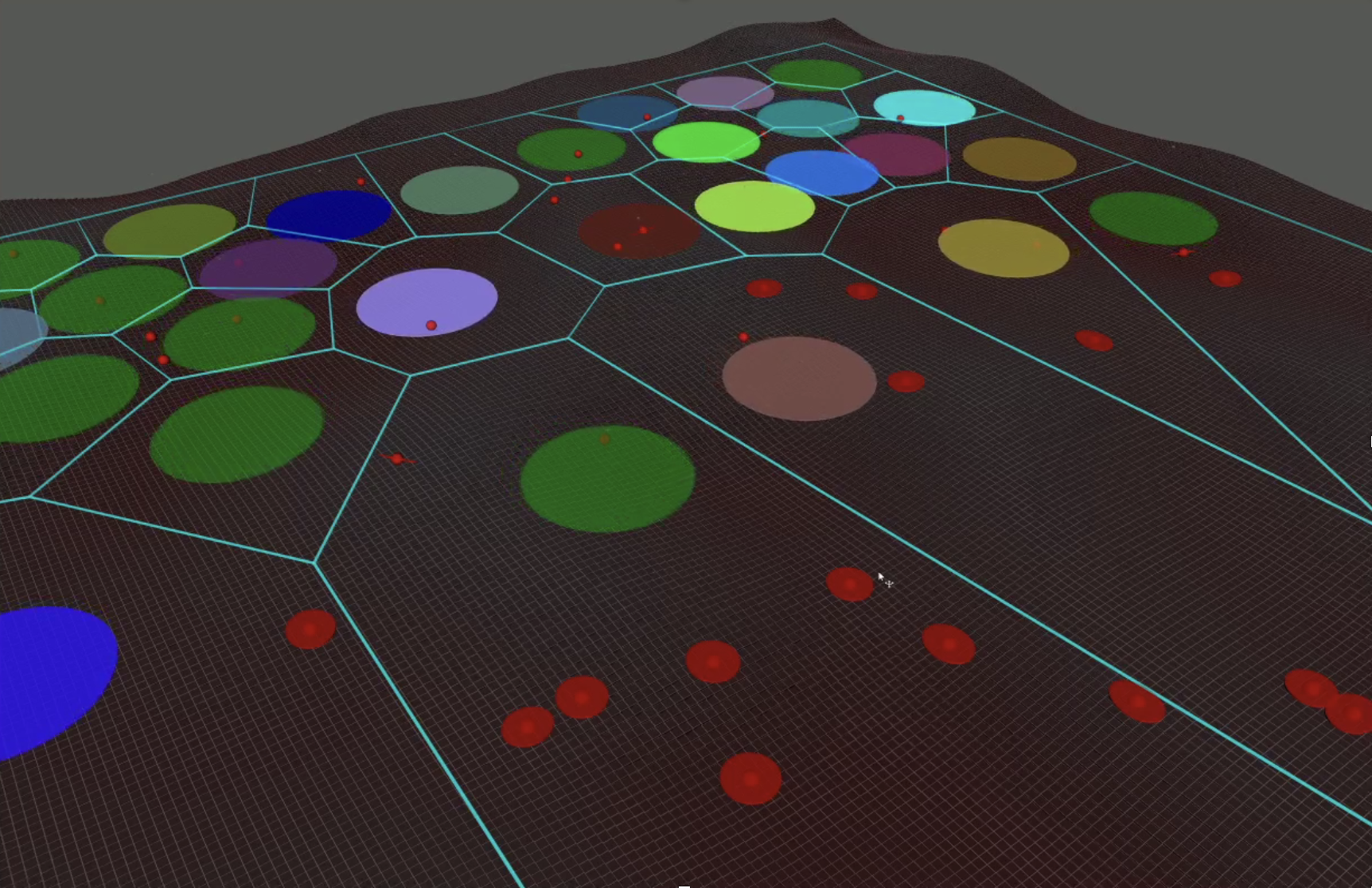}}
       \subfigure[]{
    \label{fig:t31}
  \includegraphics[width=0.48\linewidth]{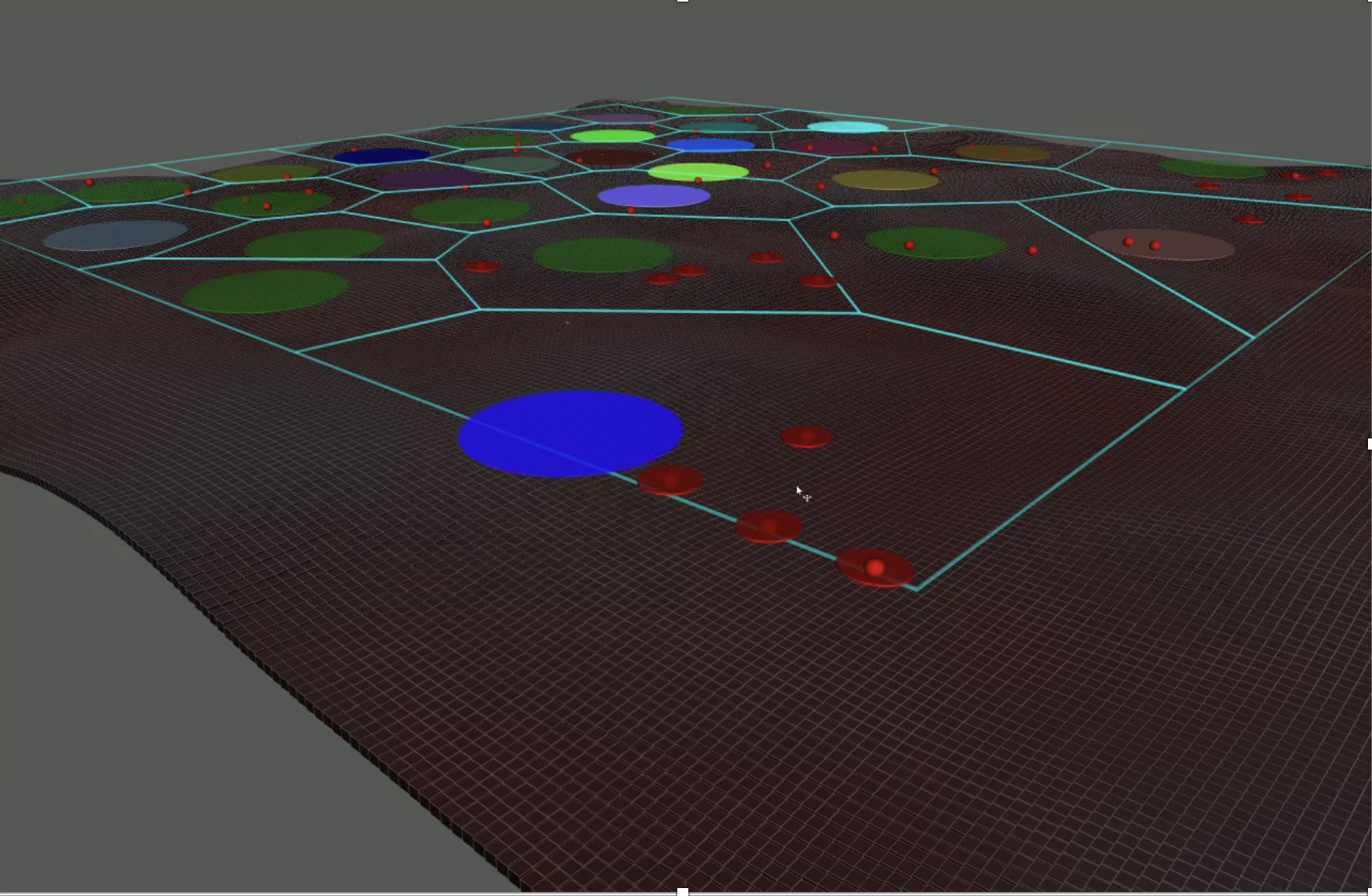}}
     \subfigure[]{
    \label{fig:t41}
  \includegraphics[width=0.48\linewidth]{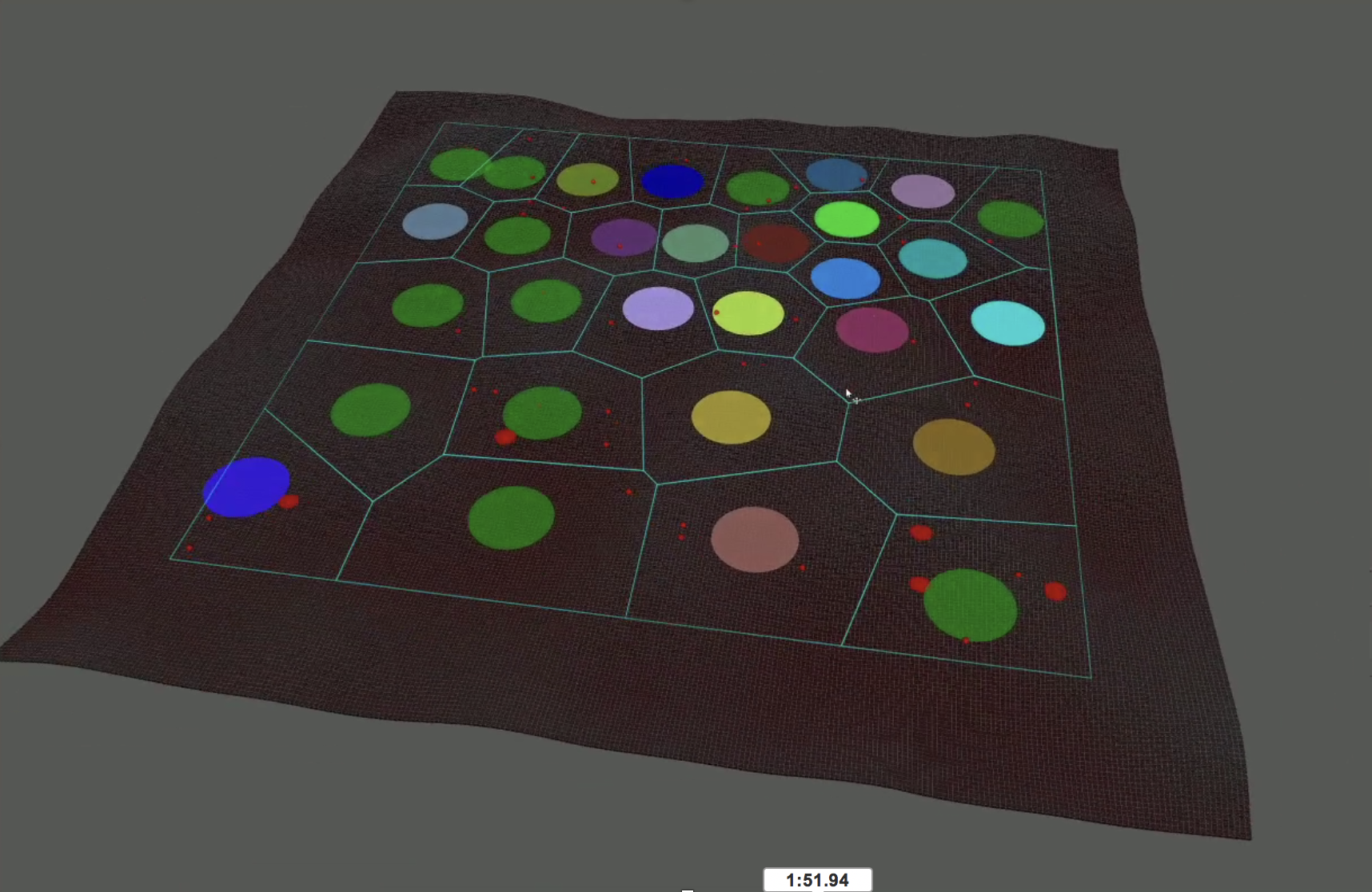}}
 \caption{Landmark localization scenario: Figures \ref{fig:t01}-\ref{fig:t41} show successive snapshots of a team of $N=30$ UAVs with limited field of view (colored disks) towards localizing $M=50$ landmarks (red spheres). 
 The red ellipses denote the uncertainty about the landmark positions. The full video can be found in \cite{sim_AIA}.}
  \label{fig:drones}
\end{figure}

The initial configuration of the UAVs and the landmarks are shown in Figure \ref{fig:t01}. Given the paths generated by Algorithm \ref{alg:AIA} considering differential drive dynamics, we compute minimum jerk trajectories, defined by fifth-order polynomials, that connect consecutive waypoints in the nominal paths. To determine the coefficients of this polynomial, we impose boundary conditions on the UAV positions that require the travel time between consecutive waypoints in the nominal paths to be $T=2$ seconds for all UAVs \cite{lavalle2006planning}. 
%
The UAVs are controlled to follow the resulting trajectories using the ROS package developed in \cite{Furrer2016}. Snapshots of the UAVs navigating the environment to localize all landmarks are presented in Figure \ref{fig:drones}.

{\bf{Mobile Landmarks:}} Finally, we illustrate the performance of Algorithm \ref{alg:AIA} in an environment with mobile landmarks. Specifically, we consider $M=100$ landmarks, among which $8$ are governed by noisy linear time invariant dynamics in the form of $\bbx_i(t+1)=\bbA_i\bbx_i(t)+\bbB_i\bba_i(t)+\bbw_i(t)$ and the rest are static. The trajectories of all landmarks and robots are illustrated in Figure \ref{fig:caseId}. Observe in this figure that because of the dynamic nature of the landmarks, there are landmarks that need to be revisited to decrease their uncertainty giving rise to target tracking behaviors; see e.g., landmark $95$ in Figures \ref{fig:t50d} and \ref{fig:t60d}. We note that although Algorithm \ref{alg:AIA} is not complete in the presence of dynamic hidden states, it was able to find feasible controllers for a large team of robots and landmarks. 
\begin{figure}[t]
  \centering
%
\subfigure[$t=0$]{
    \label{fig:t0d}
  \includegraphics[width=0.45\linewidth]{figures/time1_50_100.pdf}}
 %
     \subfigure[$t=50$]{
    \label{fig:t50d}
  \includegraphics[width=0.45\linewidth]{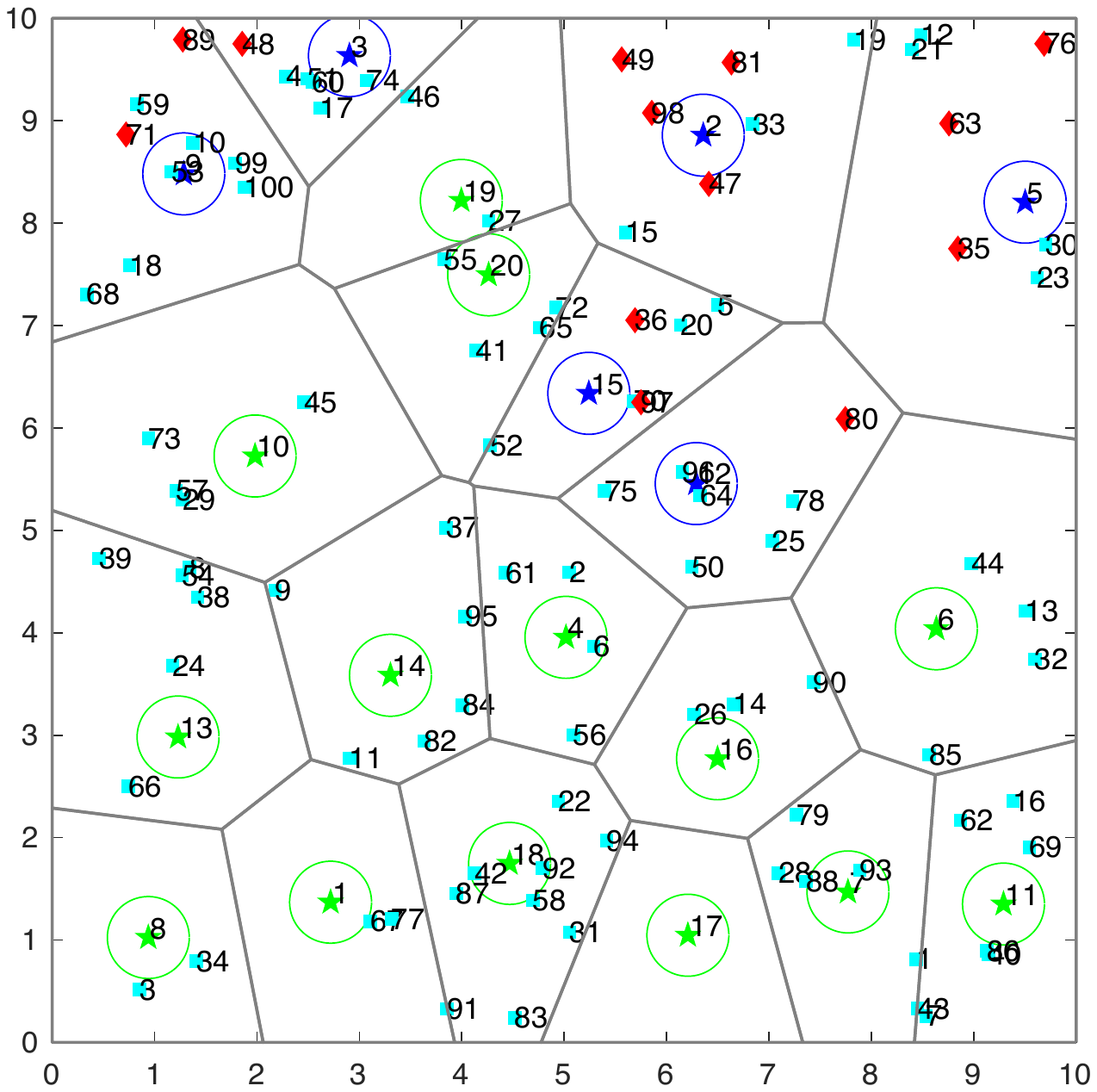}}
       \subfigure[$t=60$]{
    \label{fig:t60d}
  \includegraphics[width=0.45\linewidth]{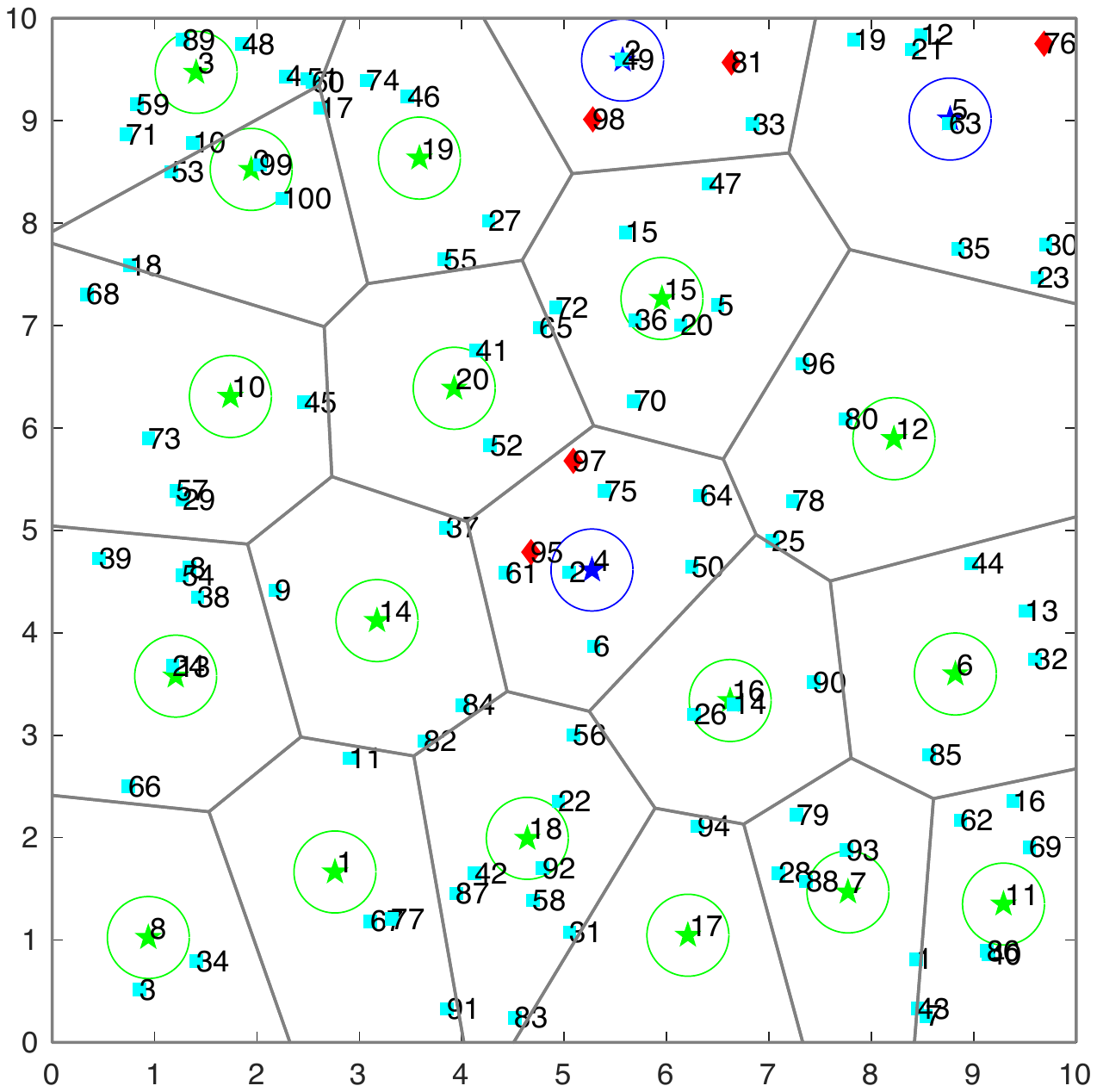}}
     \subfigure[Robot \& Landmark Paths]{
    \label{fig:pathsId}
  \includegraphics[width=0.45\linewidth]{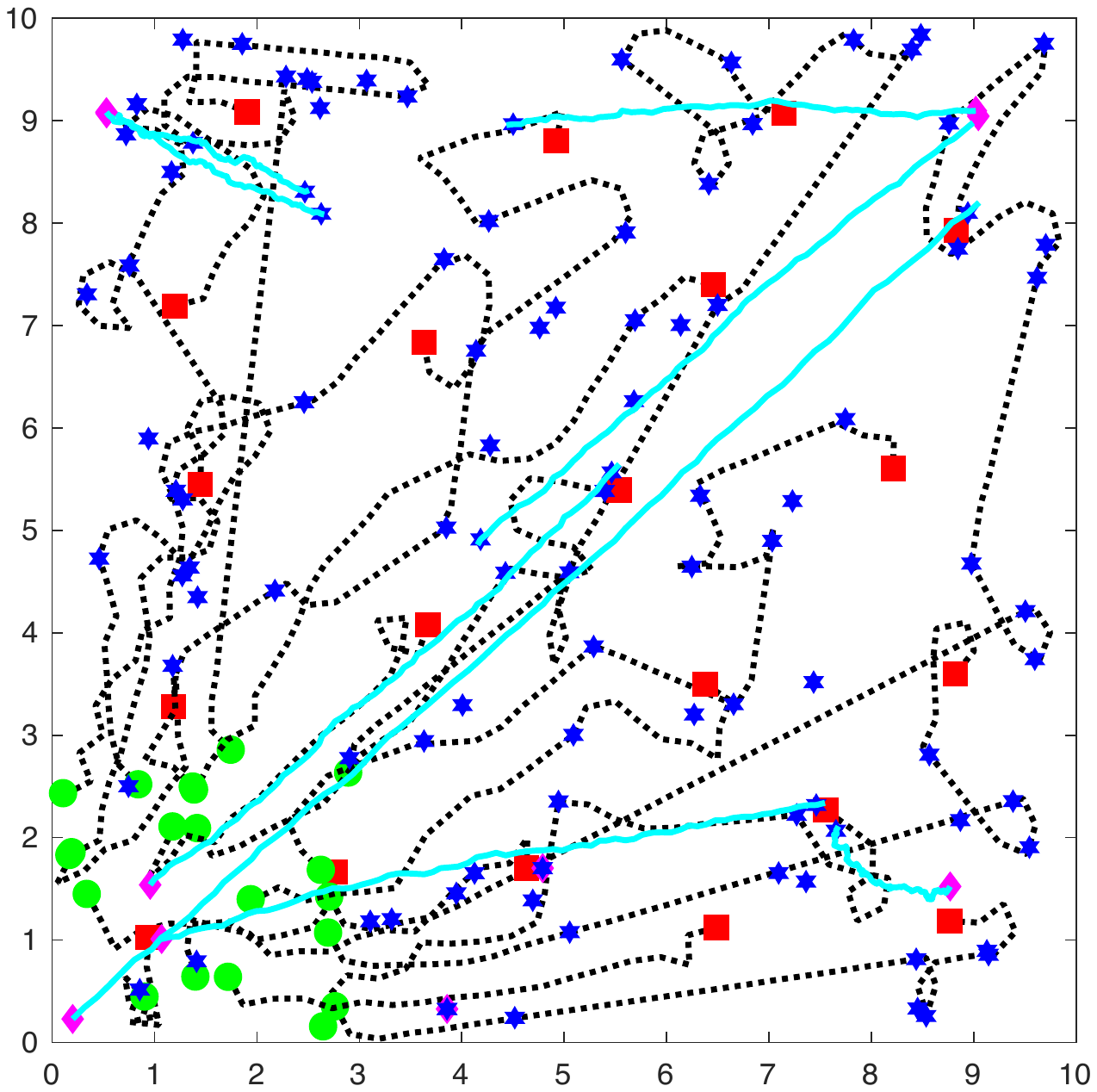}}
 \caption{Target tracking and localization scenario: Figures \ref{fig:t0d}-\ref{fig:t60d} show the configurations of $N=20$ robots at various time instants towards localizing $M=100$ landmarks among which $8$ are mobile and the rest are static. 
 Figure \ref{fig:pathsId} shows the robot and target paths. The green circles and red squares denote the initial and final robot positions while the magenta diamonds and blue stars denote the initial and final landmark locations. Static landmarks are shown with blue stars.}
  \label{fig:caseId}
\end{figure}



\section{Conclusion} \label{sec:Concl}
In this paper, we proposed a new highly scalable, non-myopic, and probabilistically complete planning algorithm for multi-robot AIA tasks. Extensive simulation studies validated the theoretical analysis and showed that the proposed method can quickly compute sensor policies that satisfy desired uncertainty thresholds for large-scale AIA tasks. 

\appendices


\bibliographystyle{IEEEtran}
\bibliography{YK_bib.bib}

\begin{thebibliography}{10}
\providecommand{\url}[1]{#1}
\csname url@samestyle\endcsname
\providecommand{\newblock}{\relax}
\providecommand{\bibinfo}[2]{#2}
\providecommand{\BIBentrySTDinterwordspacing}{\spaceskip=0pt\relax}
\providecommand{\BIBentryALTinterwordstretchfactor}{4}
\providecommand{\BIBentryALTinterwordspacing}{\spaceskip=\fontdimen2\font plus
\BIBentryALTinterwordstretchfactor\fontdimen3\font minus
  \fontdimen4\font\relax}
\providecommand{\BIBforeignlanguage}[2]{{%
\expandafter\ifx\csname l@#1\endcsname\relax
\typeout{** WARNING: IEEEtran.bst: No hyphenation pattern has been}%
\typeout{** loaded for the language `#1'. Using the pattern for}%
\typeout{** the default language instead.}%
\else
\language=\csname l@#1\endcsname
\fi
#2}}
\providecommand{\BIBdecl}{\relax}
\BIBdecl

\bibitem{huang2015bank}
G.~Huang, K.~Zhou, N.~Trawny, and S.~I. Roumeliotis, ``A bank of maximum a
  posteriori (map) estimators for target tracking,'' \emph{IEEE Transactions on
  Robotics}, vol.~31, no.~1, pp. 85--103, 2015.

\bibitem{lu2018mobile}
Q.~Lu and Q.-L. Han, ``Mobile robot networks for environmental monitoring: A
  cooperative receding horizon temporal logic control approach,'' \emph{IEEE
  Transactions on Cybernetics}, 2018.

\bibitem{carlone2014active}
L.~Carlone, J.~Du, M.~K. Ng, B.~Bona, and M.~Indri, ``Active {SLAM} and
  exploration with particle filters using {K}ullback-{L}eibler divergence,''
  \emph{Journal of Intelligent \& Robotic Systems}, vol.~75, no.~2, pp.
  291--311, 2014.

\bibitem{atanasov2015distributed}
N.~A. Atanasov, J.~Le~Ny, and G.~J. Pappas, ``Distributed algorithms for
  stochastic source seeking with mobile robot networks,'' \emph{Journal of
  Dynamic Systems, Measurement, and Control}, vol. 137, no.~3, p. 031004, 2015.

\bibitem{kumar2004robot}
V.~Kumar, D.~Rus, and S.~Singh, ``Robot and sensor networks for first
  responders,'' \emph{IEEE Pervasive computing}, vol.~3, no.~4, pp. 24--33,
  2004.

\bibitem{kantaros2019asymptotically}
Y.~Kantaros, B.~Schlotfeldt, N.~Atanasov, and G.~J. Pappas, ``Asymptotically
  optimal planning for non-myopic multi-robot information gathering.'' in
  \emph{Robotics: Science and Systems}, 2019.

\bibitem{Cortes_TRA04}
J.~Cort\'{e}s, S.~Martinez, T.~Karatas, and F.~Bullo, ``Coverage control for
  mobile sensing networks,'' \emph{IEEE Transactions on Robotics and
  Automation}, vol.~20, no.~2, pp. 243--255, 2004.

\bibitem{kantaros2015distributed}
Y.~Kantaros, M.~Thanou, and A.~Tzes, ``Distributed coverage control for concave
  areas by a heterogeneous robot--swarm with visibility sensing constraints,''
  \emph{Automatica}, vol.~53, pp. 195--207, 2015.

\bibitem{leung2006active}
C.~Leung, S.~Huang, and G.~Dissanayake, ``Active slam using model predictive
  control and attractor based exploration,'' in \emph{2006 IEEE/RSJ
  International Conference on Intelligent Robots and Systems}, 2006, pp.
  5026--5031.

\bibitem{smith2018distributed}
A.~J. Smith and G.~A. Hollinger, ``Distributed inference-based multi-robot
  exploration,'' \emph{Autonomous Robots}, vol.~42, no.~8, pp. 1651--1668,
  2018.

\bibitem{corah2019communication}
M.~Corah, C.~O’Meadhra, K.~Goel, and N.~Michael, ``Communication-efficient
  planning and mapping for multi-robot exploration in large environments,''
  \emph{IEEE Robotics and Automation Letters}, vol.~4, no.~2, pp. 1715--1721,
  2019.

\bibitem{wang2020autonomous}
J.~Wang and B.~Englot, ``Autonomous exploration with
  expectation-maximization,'' in \emph{Robotics Research}.\hskip 1em plus 0.5em
  minus 0.4em\relax Springer, 2020, pp. 759--774.

\bibitem{sim_AIA}
Simulation\_Video, \texttt{\url{https://vimeo.com/468212369}}.

\bibitem{martinez2006optimal}
S.~Mart{\'\i}nez and F.~Bullo, ``Optimal sensor placement and motion
  coordination for target tracking,'' \emph{Automatica}, vol.~42, no.~4, pp.
  661--668, 2006.

\bibitem{graham2008cooperative}
R.~Graham and J.~Cort{\'e}s, ``A cooperative deployment strategy for optimal
  sampling in spatiotemporal estimation,'' in \emph{47th IEEE Conference on
  Decision and Control}, 2008, pp. 2432--2437.

\bibitem{dames2012decentralized}
P.~Dames, M.~Schwager, V.~Kumar, and D.~Rus, ``A decentralized control policy
  for adaptive information gathering in hazardous environments,'' in
  \emph{Decision and Control (CDC), 2012 IEEE 51st Annual Conference on}.\hskip
  1em plus 0.5em minus 0.4em\relax IEEE, 2012, pp. 2807--2813.

\bibitem{charrow2014approximate}
B.~Charrow, V.~Kumar, and N.~Michael, ``Approximate representations for
  multi-robot control policies that maximize mutual information,''
  \emph{Autonomous Robots}, vol.~37, no.~4, pp. 383--400, 2014.

\bibitem{meyer2015distributed}
F.~Meyer, H.~Wymeersch, M.~Fr{\"o}hle, and F.~Hlawatsch, ``Distributed
  estimation with information-seeking control in agent networks,'' \emph{IEEE
  Journal on Selected Areas in Communications}, vol.~33, no.~11, pp.
  2439--2456, 2015.

\bibitem{corah2018distributed}
M.~Corah and N.~Michael, ``Distributed submodular maximization on partition
  matroids for planning on large sensor networks,'' in \emph{IEEE Conference on
  Decision and Control}, Miami Beach, FL, 2018, pp. 6792--6799.

\bibitem{le2009trajectory}
J.~Le~Ny and G.~J. Pappas, ``On trajectory optimization for active sensing in
  gaussian process models,'' in \emph{Proceedings of the 48th IEEE Conference
  on Decision and Control, held jointly with the 28th Chinese Control
  Conference}, Shanghai, China, 2009, pp. 6286--6292.

\bibitem{singh2009efficient}
A.~Singh, A.~Krause, C.~Guestrin, and W.~J. Kaiser, ``Efficient informative
  sensing using multiple robots,'' \emph{Journal of Artificial Intelligence
  Research}, vol.~34, pp. 707--755, 2009.

\bibitem{atanasov2015decentralized}
N.~Atanasov, J.~Le~Ny, K.~Daniilidis, and G.~J. Pappas, ``Decentralized active
  information acquisition: Theory and application to multi-robot {SLAM},'' in
  \emph{IEEE International Conference on Robotics and Automation}, Seattle, WA,
  2015, pp. 4775--4782.

\bibitem{schlotfeldt2018anytime}
B.~Schlotfeldt, D.~Thakur, N.~Atanasov, V.~Kumar, and G.~J. Pappas, ``Anytime
  planning for decentralized multirobot active information gathering,''
  \emph{IEEE Robotics and Automation Letters}, vol.~3, no.~2, pp. 1025--1032,
  2018.

\bibitem{levine2010information}
D.~Levine, B.~Luders, and J.~P. How, ``Information-rich path planning with
  general constraints using rapidly-exploring random trees,'' in \emph{AIAA
  Infotech at Aerospace Conference, Atlanta, GA}, 2010.

\bibitem{hollinger2014sampling}
G.~A. Hollinger and G.~S. Sukhatme, ``Sampling-based robotic information
  gathering algorithms,'' \emph{The International Journal of Robotics
  Research}, vol.~33, no.~9, pp. 1271--1287, 2014.

\bibitem{khodayi2019distributed}
R.~Khodayi-mehr, Y.~Kantaros, and M.~M. Zavlanos, ``Distributed state
  estimation using intermittently connected robot networks,'' \emph{IEEE
  Transactions on Robotics}, vol.~35, no.~3, pp. 709--724, 2019.

\bibitem{lan2016rapidly}
X.~Lan and M.~Schwager, ``Rapidly exploring random cycles: Persistent
  estimation of spatiotemporal fields with multiple sensing robots,''
  \emph{IEEE Transactions on Robotics}, vol.~32, no.~5, pp. 1230--1244, 2016.

\bibitem{kantaros2020semantic}
Y.~Kantaros, Q.~Jin, and G.~Pappas, ``Sensor-based temporal logic planning in
  uncertain semantic maps,'' \emph{\url{https://arxiv.org/pdf/2012.10490.pdf}},
  2020.

\bibitem{rosinol2019kimera}
A.~Rosinol, M.~Abate, Y.~Chang, and L.~Carlone, ``Kimera: an open-source
  library for real-time metric-semantic localization and mapping,'' \emph{arXiv
  preprint arXiv:1910.02490}, 2019.

\bibitem{bowman2017probabilistic}
S.~L. Bowman, N.~Atanasov, K.~Daniilidis, and G.~J. Pappas, ``Probabilistic
  data association for semantic {SLAM},'' in \emph{IEEE International
  Conference on Robotics and Automation}, Singapore, May-June 2017, pp.
  1722--1729.

\bibitem{atanasov2014information}
N.~Atanasov, J.~Le~Ny, K.~Daniilidis, and G.~J. Pappas, ``Information
  acquisition with sensing robots: Algorithms and error bounds,'' in \emph{IEEE
  International Conference on Robotics and Automation}, Hong Kong, China, 2014,
  pp. 6447--6454.

\bibitem{yamauchi1997frontier}
B.~Yamauchi, ``A frontier-based approach for autonomous exploration,'' in
  \emph{Proceedings 1997 IEEE International Symposium on Computational
  Intelligence in Robotics and Automation CIRA'97.'Towards New Computational
  Principles for Robotics and Automation'}, 1997, pp. 146--151.

\bibitem{Cortes_ESAIMCOCV05}
J.~Cort\'{e}s, S.~Martinez, and F.~Bullo, ``Spatially-distributed coverage
  optimization and control with limited-range interactions,'' \emph{ESAIM:
  Control, Optimisation and Calculus of Variations}, vol.~11, no.~4, pp.
  691--719, 2005.

\bibitem{lavalle2006planning}
S.~M. LaValle, \emph{Planning algorithms}.\hskip 1em plus 0.5em minus
  0.4em\relax Cambridge university press, 2006.

\bibitem{Furrer2016}
\BIBentryALTinterwordspacing
F.~Furrer, M.~Burri, M.~Achtelik, and R.~Siegwart, \emph{Robot Operating System
  (ROS): The Complete Reference (Volume 1)}.\hskip 1em plus 0.5em minus
  0.4em\relax Cham: Springer International Publishing, 2016, ch. RotorS---A
  Modular Gazebo MAV Simulator Framework, pp. 595--625. [Online]. Available:
  \url{http://dx.doi.org/10.1007/978-3-319-26054-9_23}
\BIBentrySTDinterwordspacing

\end{thebibliography}

\end{document}